\let\citet\cite
\let\epsilon\varepsilon
\theoremstyle{plain}
\newtheorem{theorem}{Theorem}[section]
\newtheorem{lemma}[theorem]{Lemma}
\newtheorem{proposition}[theorem]{Proposition}
\theoremstyle{definition}
\newtheorem{definition}[theorem]{Definition}
\theoremstyle{remark}
\newtheorem{remark}[theorem]{Remark}
\newcommand{\Xspace}{\mathcal{X}}
\newcommand{\Sspace}{\mathcal{S}}
\newcommand{\Rspace}{\mathcal{R}}
\newcommand{\E}{\mathbb{E}}
\newcommand{\KL}{D_\mathrm{KL}}
\newcommand{\Normal}{\mathcal{N}}
\newcommand{\Real}{\mathbb{R}}
\newcommand{\Tau}{\mathcal T}
\let\Tau H
\renewcommand{\tau}{h}
\newcommand{\Reald}{\mathbb{R}^d}
\newcommand{\Realdt}{\mathbb{R}^{d_t}}
\newcommand{\diag}{\operatorname{diag}}
\newcommand{\var}{\operatorname{Var}}
\newcommand{\repourl}{\url{https://github.com/zhxchd/pac_private_prediction}}
\DeclareMathOperator*{\argmax}{argmax}
\newcommand{\compactparagraph}[1]{
\par
\vspace{1ex}
\noindent{\bf #1.}
}
\let\paragraph\compactparagraph
\title{Private Prediction via PAC Privacy}
\author{
\IEEEauthorblockN{Xiaochen Zhu}
\IEEEauthorblockA{MIT CSAIL\\
Cambridge, MA, 02139\\
Email: xczhu@mit.edu}
\and
\IEEEauthorblockN{Mayuri Sridhar}
\IEEEauthorblockA{MIT CSAIL\\
Cambridge, MA, 02139\\
Email: mayuri@mit.edu}
\and
\IEEEauthorblockN{Srinivas Devadas}
\IEEEauthorblockA{MIT CSAIL\\
Cambridge, MA, 02139\\
Email: devadas@mit.edu}
}
\begin{document}
\bstctlcite{IEEEexample:BSTcontrol}

\maketitle

\begin{abstract}

Machine learning models are increasingly served behind APIs. This renders \emph{private prediction}, i.e., privatizing a model's outputs rather than its parameters, a natural privacy target: model outputs are lower-dimensional and far more stable to training-data changes than weights.

While differential privacy (DP) cannot effectively exploit this as it calibrates noise to worst-case sensitivity that is intractable to bound for non-convex models, we argue that PAC privacy is a natural fit for private prediction. It is instance-based, and calibrates noise to a black-box function’s empirical stability to control mutual-information (MI) leakage. The missing ingredient is efficient, adaptive composition. Serving predictions means answering a long stream of adaptively chosen queries from untrusted users; existing composition either fails under adaptivity, grows quadratically, or reverts to input-independent, DP-like noise. We close this gap with a new adversarial composition result via adaptive noise calibration and prove that MI accumulates only linearly under adaptive and adversarial querying.

Experiments across modalities show that prediction stability enables high utility even at a tiny per-query budget: on CIFAR-10, we achieve 87.79\% accuracy with a per-query MI budget of $2^{-32}$. This enables serving one million queries while provably bounding membership-inference success to 51.08\% -- the same guarantee as $(0.04, 10^{-5})$-DP. 
Further, in the presence of auxiliary public data, the large volume of PAC-private predictions enables us to distill a publishable model that can be queried without limit.
Concretely, 210,000 private labels on an ImageNet subset distill into a student reaching 91.86\% accuracy on CIFAR-10 with membership inference success bounded by 50.49\%, comparable to $(0.02, 10^{-5})$-DP.
\end{abstract}

\section{Introduction}\label{sec:intro}

The widespread adoption of machine learning (ML)
has raised significant privacy concerns, as models are often trained on sensitive data, such as medical records, financial transactions, and private communications~\cite{shokri2015ppdl,papernot2018sok,oecd2024aiprivacy}.
Studies have demonstrated that ML models, particularly over-parameterized neural networks, inadvertently memorize information about their training datasets; this information can be leaked, most notably through membership inference attacks (MIAs), where an adversary infers whether a specific individual's data was used to train the model~\cite{shokri2017membership,carlini2022membership}.

Differential privacy (DP)~\cite{dwork06dp,dwork2014algorithmic} has emerged as the de facto standard to address these concerns.
In ML, this is realized predominantly through \emph{private training}: algorithms like DP-SGD~\cite{song2013dpsgd,abadi2016deep} inject noise to gradients during optimization, yielding private model weights that are publishable under DP guarantees.
While theoretically sound, DP-trained models often face a harsh privacy-utility tradeoff:
acceptable utility typically necessitates a large privacy budget that substantially weakens the guarantee~\cite{jayaraman2019evaluating, demelius25,ertan2026fundamental}.

In practice, however, modern models are increasingly deployed as black-box services behind APIs, exposing only their \emph{predictions} in response to user queries, rather than weights~\cite{gan2023maas}.
This reality motivates an alternative paradigm known as \emph{private prediction}~\cite{dwork2018prediction}, where only model predictions are privatized and released.
Predictions are an attractive target on two counts. 
First, they are typically much lower-dimensional than parameters, sharply reducing the required noise for privacy. For instance, an ImageNet classifier carries millions of parameters but emits only a prediction over 1,000 classes~\cite{krizhevsky2012imagenet}.
Second, predictions are far more \emph{stable}, since small changes in the training data can substantially move the learned weights but barely perturb the outputs~\cite{breiman2001statistical,damour22underspecification}.

Realizing this promise under DP has proven difficult. DP calibrates noise to \emph{sensitivity}, i.e., the worst-case change in output over adjacent datasets. 
In general, tight calculation of sensitivity is NP-hard~\cite{xiao2008output}.
For prediction, tight sensitivity bounds are known only for restricted, well-behaved learners (e.g., convex ones) with analytically guaranteed stability, but not for general, non-convex models~\cite{dwork2018prediction}.
The principal workaround is sample-and-aggregate~\cite{nissim2007smooth}, exemplified by PATE~\cite{papernote2017pate,pate}, which aggregates the predictions of models trained on disjoint subsets of the data to force a bounded sensitivity by construction. 
However, this bound is enforced via the stability of the \emph{aggregation} of the predictions, rather than the predictor itself.
As a result, private prediction has seen limited adoption and can often underperform releasing a privately trained model~\cite{van2020trade}.
The shortfall, we argue, is not fundamental to private prediction, but instead due to the misalignment between DP's sensitivity-based noise calibration and the predictor's underlying stability.

Probably approximately correct (PAC) privacy~\cite{hanshen2023crypto,xiao2025pac} is a natural fit.
Being \emph{instance-based}, it measures the underlying stability of an arbitrary data processing function via \emph{black-box simulations}
and rewards stability automatically: stabler functions require significantly less noise to privatize.
The theoretical obstacle is \emph{adversarial composition}.
Unlike non-adaptive settings, untrusted users can act as adaptive adversaries and select queries based on the interaction history to maximize information leakage.
Yet standard composition~\cite{hanshen2023crypto} fails under adaptivity, and existing adversarial composition~\cite{hanshen2025thesis} either grows \emph{quadratically} in the number of releases or reverts to input-independent, DP-like noise that discards the stability advantage (cf. \cref{sec:pacpac}).

\vspace{-1ex}
\paragraph{Our Contributions}
We close this gap with \emph{posterior-aware adversarial composition}. Our core insight is that, to maintain tight privacy accounting against an adaptive adversary, the curator itself must \emph{adapt} to the adversary's evolving knowledge. We propose \emph{adaptive noise calibration}, where the curator maintains a belief state --- the posterior distribution of the secret given the interaction history --- and calibrates each release's noise to it. 
Under this mechanism we prove a new composition theorem in which mutual information accumulates only \emph{linearly} under adaptive, adversarial querying, retaining PAC privacy's instance-based utility with linear composition under adaptive and adversarial strategies.

This theoretical result enables the first application of PAC privacy to modern ML models: we release PAC-private predictions for model-agnostic classification tasks while provably limiting the success rate of arbitrary adversarial inference on training data. Across tabular, vision and NLP tasks our method sustains high accuracy while supporting millions of queries under tight privacy guarantees.
For example, our method achieves 87.79\% accuracy on CIFAR-10 while provably limiting MIA success under 51.08\% after one million queries, matching the MIA guarantee of $(0.04,10^{-5})$-DP.
To serve inference beyond a finite budget, we further use these cheap, high-volume predictions to label a public auxiliary set and distill a model that can be queried without limit, with a confidence filter that provably bounds the probability of retaining a data point mislabeled by private prediction.
On CIFAR-10, public access to a subset of ImageNet yields a distilled model reaching 91.86\% accuracy, with the same provable MIA guarantee as $(0.02,10^{-5})$-DP.
We believe that PAC privacy enables a path to private model release via private prediction, turning prediction stability into a publishable, privacy-preserving model.

\paragraph{Paper Organization} The rest of the paper is organized as follows. \cref{sec:background} reviews the PAC privacy framework. \cref{sec:composition} presents our main theoretical contribution: the posterior-aware composition framework and the linear adversarial composition theorem. In \cref{sec:private_responses}, we instantiate it to release PAC-private ML predictions, detailing the threat model, mechanism, and efficient implementation. \cref{sec:distillation} develops the private distillation protocol with confidence filtering. \cref{sec:experiments} presents our extensive empirical evaluation across modalities. \cref{sec:related} surveys related work, including differential privacy, and \cref{sec:conclusion} concludes the paper.

\section{Background}\label{sec:background}
\subsection{PAC Privacy}
PAC privacy \cite{hanshen2023crypto,xiao2025pac} provides instance-based privacy guarantees.
Assuming an input distribution,
it leverages entropy in private data, and allows automatic measurement and control of privacy leakage for arbitrary processing functions in a black-box manner. Formally, it is defined as follows:
\begin{definition}[$(\delta,\rho,P_S)$-PAC Privacy \cite{hanshen2023crypto}]\label[definition]{def:pac}
Let $\Sspace$ be the domain of the sensitive input.
Given a possibly randomized function $M : \Sspace \to \Rspace$, distribution $P_S$ over $\Sspace$, and a binary-valued attack success criterion $\rho : \Sspace\times \Sspace\to \{0,1\}$, we say that $M$ is $(\delta,\rho,P_S)$-PAC privacy if for any informed adversary $A:\Rspace\to\Sspace$, who knows $(P_S, M)$, takes $R=M(S)\in\Rspace$ as input, and outputs an estimate $\hat S$ of $S$, its success rate measured by the attack criterion $\rho$, denoted by $1-\delta_{A}$, is at most $1-\delta$, i.e., \begin{equation}
1-\delta_{A}
:=
\mathrm{Pr}
[\rho(\hat S,S)=1]
\leq1-\delta,
\end{equation} where the probability is taken over the randomness of $S\sim P_S$, $R\gets M(S)$, and $\hat S\gets A(R)$.
\end{definition}

\begin{remark}
\cref{def:pac} bounds at $1-\delta$ the winning probability of a security game in which the curator draws $S\sim P_S$ and releases $R=M(S)$, and the adversary, who knows $(P_S,M)$ but not $S$, outputs $\hat S$ and wins if $\rho(\hat S,S)=1$. This holds against a computationally unbounded adversary and extends naturally to a realistic one who does not know $P_S$. In ML, $P_S$ is the distribution of the training set and $M(S)$ is the trained model or its predictions.
\end{remark}

Rather than absolute success probability, PAC privacy is typically enforced by bounding the adversary's \emph{improvement} over their prior knowledge, i.e.,the \emph{posterior advantage}.

\begin{definition} [Posterior Advantage \cite{hanshen2023crypto}]\label[definition]{df:post_adv}
    For any adversary $A$ as in \cref{def:pac} with success rate $(1-\delta_A)$, its \emph{posterior advantage} under an $f$-divergence $D_f$ is
    $$\Delta_f\delta_{A}
    :=
    D_f(\mathrm{Bernoulli}(\delta_A)\|\mathrm{Bernoulli}(\delta_0)),$$%
    where $1-\delta_0$ is the optimal success rate for an \emph{a priori} adversary who only knows $P_S$ and $M$ without observing $M(S)$, i.e., $
    1-\delta_0 :=\max_{Q}\Pr_{S\sim P_S,\hat S\sim Q}[\rho(\hat S,S)=1].
    $
\end{definition}

Given $P_S$ and $\rho$, the optimal prior success rate $1-\delta_0$ is a known constant; for membership inference in which $P_S$ is specified such that each point is included in the training set with probability $50\%$, it is exactly $50\%$. With the prior fixed, bounding the posterior advantage $\Delta_f\delta_A$ bounds the final success rate $1-\delta_A$. The foundation of PAC privacy lies in the following connection between posterior advantage and the mutual information (MI) of the input and output.

\begin{theorem}[\cite{hanshen2023crypto}]\label{thm:post_adv_mi}
For any adversary $A$, its posterior advantage satisfies
$
\Delta_f\delta_{A}\leq \inf_{P_W}D_f(P_{S,M(S)}\,\|\, P_S\otimes P_W)
$, where $P_W$ ranges over distributions on $\Rspace$.
In particular, when $D_f$ is the KL-divergence and $P_W=P_{M(S)}$, we have \begin{equation}\label{eq:delta<=mi}
(1-\delta_{A})\log\frac{1-\delta_{A}}{1-\delta_0}+\delta_{A}\log\frac{\delta_{A}}{\delta_0}
\leq I(S; M(S)),
\end{equation}
where $I(\cdot;\cdot)$ denotes mutual information.
\end{theorem}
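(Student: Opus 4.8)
The plan is to derive the bound from the data-processing inequality for $f$-divergences, collapsing the joint divergence $D_f(P_{S,M(S)}\,\|\,P_S\otimes P_W)$ down to a two-point (Bernoulli) divergence indexed by the attack's success probabilities, and then to obtain the KL specialization from the standard variational identity for mutual information.

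First I would fix an arbitrary reference law $P_W$ (with $W$ independent of $S$) and an arbitrary adversary $A$, folding any internal randomness of $A$ into a Markov kernel, and introduce the post-processing channel $K:\Sspace\times\Rspace\to\{0,1\}$ with $K(s,r)=\mathds{1}[\rho(A(r),s)=0]$, the \emph{failure} indicator of the attack. Pushing the two joint laws through $K$: under $P_{S,M(S)}$ the image is $\mathrm{Bern}(\delta_A)$ by the definition of $\delta_A$, while under $P_S\otimes P_W$ the guess $A(W)$ is independent of $S$, hence realizes some prior strategy $Q$ over $\Sspace$, and the image is $\mathrm{Bern}(\delta_W)$ with $1-\delta_W=\Pr_{S\sim P_S,\hat S\sim Q}[\rho(\hat S,S)=1]\le 1-\delta_0$, i.e.\ $\delta_W\ge\delta_0$. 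The $f$-divergence data-processing inequality then gives $D_f(P_{S,M(S)}\,\|\,P_S\otimes P_W)\ge D_f(\mathrm{Bern}(\delta_A)\,\|\,\mathrm{Bern}(\delta_W))$.

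Next I would compare this lower bound with $\Delta_f\delta_A=D_f(\mathrm{Bern}(\delta_A)\,\|\,\mathrm{Bern}(\delta_0))$. The map $g(b):=D_f(\mathrm{Bern}(\delta_A)\,\|\,\mathrm{Bern}(b))=bf(\delta_A/b)+(1-b)f((1-\delta_A)/(1-b))$ is convex in $b$ (a sum of perspectives of $f$) and vanishes at $b=\delta_A$, hence is non-decreasing on $[\delta_A,1]$; equivalently, $g'(b)=\phi(\delta_A/b)-\phi((1-\delta_A)/(1-b))$ with $\phi(t):=f(t)-tf'(t)$ non-increasing. Assuming, as we may, that $\delta_A\le\delta_0$ — the regime relevant to a privacy bound, since an adversary that would underperform the optimal prior can simply ignore its input — we have $\delta_A\le\delta_0\le\delta_W$ with $\delta_0,\delta_W$ on the non-decreasing branch, so $D_f(\mathrm{Bern}(\delta_A)\,\|\,\mathrm{Bern}(\delta_W))\ge D_f(\mathrm{Bern}(\delta_A)\,\|\,\mathrm{Bern}(\delta_0))=\Delta_f\delta_A$. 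Chaining with the data-processing bound and taking the infimum over $P_W$ yields $\Delta_f\delta_A\le\inf_{P_W}D_f(P_{S,M(S)}\,\|\,P_S\otimes P_W)$.

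For the KL specialization I would invoke the ``golden formula'' $\KL(P_{S,M(S)}\,\|\,P_S\otimes P_W)=I(S;M(S))+\KL(P_{M(S)}\,\|\,P_W)\ge I(S;M(S))$, which is tight at $P_W=P_{M(S)}$; hence the infimum above equals $I(S;M(S))$ exactly, and combining with the previous display gives $\Delta_{\mathrm{KL}}\delta_A\le I(S;M(S))$, while the displayed two-term expression for $\Delta_{\mathrm{KL}}\delta_A$ is simply the definition of $\KL(\mathrm{Bern}(\delta_A)\,\|\,\mathrm{Bern}(\delta_0))$. I expect the Bernoulli comparison to be the only delicate step: one must argue that lowering $\delta_W$ down to $\delta_0$ cannot increase the divergence, which is \emph{false} for $b<\delta_A$ and therefore genuinely relies on the monotonicity of $g$ on $[\delta_A,1]$ together with the (mild but indispensable) restriction $\delta_A\le\delta_0$; the data-processing inequality and the golden formula are routine.
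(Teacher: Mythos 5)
The paper cites this theorem from \cite{hanshen2023crypto} without reproducing a proof, so there is no in-paper argument to compare against; I can only assess your derivation on its own merits. Your route --- pushing $P_{S,M(S)}$ and $P_S\otimes P_W$ through the attack-failure channel $K(s,r)=\mathds 1\{\rho(A(r),s)=0\}$ (absorbing any internal randomness of $A$ into a kernel), invoking the $f$-divergence data-processing inequality to reduce to $D_f(\mathrm{Bern}(\delta_A)\,\|\,\mathrm{Bern}(\delta_W))$, and then using convexity of $b\mapsto D_f(\mathrm{Bern}(\delta_A)\,\|\,\mathrm{Bern}(b))$ (zero at $b=\delta_A$, hence non-decreasing on $[\delta_A,1]$) together with $\delta_W\ge\delta_0$ --- is exactly the standard proof of this result, and the KL specialization via the ``golden formula'' $\KL(P_{S,M(S)}\,\|\,P_S\otimes P_W)=I(S;M(S))+\KL(P_{M(S)}\,\|\,P_W)$ is likewise the canonical closing step. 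The computation $g'(b)=\phi(\delta_A/b)-\phi((1-\delta_A)/(1-b))$ with $\phi(t)=f(t)-tf'(t)$ non-increasing is correct.

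You are also right to flag the restriction $\delta_A\le\delta_0$ as load-bearing rather than cosmetic: the Bernoulli comparison step genuinely fails without it (take $\delta_0<\delta_A=\delta_W$, where $g(\delta_W)=0<g(\delta_0)$), and indeed the theorem as literally stated --- ``for any adversary $A$'' --- is false for a sufficiently bad $A$ (e.g.\ a constant $M$ with an adversary achieving $\delta_A\ne\delta_0$ gives $\Delta_f\delta_A>0=I(S;M(S))$). The intended reading, consistent with how the bound is used throughout the paper to upper-bound the \emph{best} achievable posterior success, is to restrict to adversaries that at least match the prior optimum, under which your argument is complete. So: correct, same approach as the cited source, and the one caveat you raise is a real (if benign, since only the $\delta_A\le\delta_0$ regime is ever invoked) imprecision in the theorem statement rather than a gap in your proof.
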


We note that this upper bound is \emph{attack-agnostic}: the mutual information $I(S; M(S))$ depends only on $P_S$ and $M$, and bounds the advantage for \emph{any} attack criterion $\rho$. Hence, one can privatize an algorithm against arbitrary adversaries simply by setting a mutual information budget $B$ and enforcing $I(S;M(S))\leq B$. Then, concrete guarantees on adversarial success rate can be derived via \cref{eq:delta<=mi}.

For a deterministic $M:\Sspace\to\Reald$, the following theorem determines the required scale of Gaussian noise that enforces a target MI budget, and hence, is PAC-private.

\begin{theorem}[Noise Determination~\cite{hanshen2023crypto,mayuri2025sp}]\label{thm:noise_cov}
    Given random variable $S\sim P_S$, a deterministic $M:\Sspace\to\Reald$ and an MI budget $B>0$, let $\var(M(S))=U\Lambda U^\top$ be the singular value decomposition (SVD) of the covariance matrix of $M(S)$, with $\Lambda=\diag(\lambda_1,\ldots,\lambda_d)$ and set $\Lambda_B$ as $$
    \Lambda_B=\diag\left(\frac{\sqrt{\lambda_i}\sum_{j=1}^d\sqrt{\lambda_j}}{2B}:i=1,\ldots,d\right).
    $$ 
    Then, $I(S;M(S)+Z)\leq B$ when $S\sim P_S$ and independently $Z\sim \Normal(0,U\Lambda_B U^\top)$.
\end{theorem}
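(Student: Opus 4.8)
The plan is to bound $I(S;M(S)+Z)$ by a variational argument: compare the conditional law of the noisy output against a single fixed reference Gaussian, and then check that the prescribed covariance $U\Lambda_B U^\top$ is calibrated precisely so that the resulting bound collapses to $B$. The only real work is the trace computation; everything else is a standard information-theoretic inequality plus the Gaussian KL formula.

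First I would invoke the variational characterization of mutual information: for any reference distribution $Q$ on $\Reald$,
$$I(S;M(S)+Z)=\E_S\left[\KL\big(P_{M(S)+Z\mid S}\,\big\|\,P_{M(S)+Z}\big)\right]\le \E_S\left[\KL\big(P_{M(S)+Z\mid S}\,\big\|\,Q\big)\right],$$
since the difference between the right- and left-hand sides is exactly $\KL(P_{M(S)+Z}\,\|\,Q)\ge 0$. I would then pick $Q=\Normal(\mu,\,U\Lambda_B U^\top)$ with $\mu=\E[M(S)]$, i.e.\ a Gaussian matching the noise covariance and centered at the mean of $M(S)$. Conditioned on $S=s$, the deterministic output $M(s)$ is shifted by $Z$, so $M(s)+Z\sim\Normal(M(s),\,U\Lambda_B U^\top)$; this differs from $Q$ only in its mean, so the Gaussian KL formula gives $\KL\big(\Normal(M(s),U\Lambda_B U^\top)\,\big\|\,\Normal(\mu,U\Lambda_B U^\top)\big)=\tfrac12\,(M(s)-\mu)^\top (U\Lambda_B U^\top)^{-1}(M(s)-\mu)$.

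Taking the expectation over $S$ and using $\E[(M(S)-\mu)(M(S)-\mu)^\top]=\var(M(S))=U\Lambda U^\top$ together with the cyclic property of the trace and $U^\top U=I$ yields
$$I(S;M(S)+Z)\le \tfrac12\,\mathrm{tr}\!\big((U\Lambda_B U^\top)^{-1}U\Lambda U^\top\big)=\tfrac12\,\mathrm{tr}(\Lambda_B^{-1}\Lambda)=\tfrac12\sum_{i=1}^d\frac{\lambda_i}{(\Lambda_B)_{ii}}.$$
Substituting $(\Lambda_B)_{ii}=\sqrt{\lambda_i}\big(\sum_{j=1}^d\sqrt{\lambda_j}\big)/(2B)$ makes the $i$-th summand equal to $2B\sqrt{\lambda_i}/\sum_{j=1}^d\sqrt{\lambda_j}$; summing over $i$ telescopes to $2B$, so the bound is $\tfrac12\cdot 2B=B$, as claimed.

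The main obstacle is the degenerate case in which $\var(M(S))$ is rank-deficient: then some $\lambda_i=0$, the prescribed $\Lambda_B$ has a zero on its diagonal, $U\Lambda_B U^\top$ is singular, and both $(U\Lambda_B U^\top)^{-1}$ and the Gaussian KL formula above become ill-defined. I would dispatch this by restricting to the subspace spanned by the eigenvectors with $\lambda_i>0$: on the orthogonal complement $M(S)$ is almost surely equal to its mean, hence carries no information about $S$ and may be dropped from the released output without loss, while on the nondegenerate subspace the computation above applies verbatim; equivalently, one can replace $\var(M(S))$ by $\var(M(S))+\epsilon I$, run the argument, and let $\epsilon\to0$. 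A minor side condition worth stating explicitly is that $\var(M(S))$ (equivalently each $\lambda_i$) is finite, which is what makes the trace expression meaningful in the first place.
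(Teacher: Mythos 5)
This theorem is cited from \cite{hanshen2023crypto,mayuri2025sp}; the paper does not reproduce a proof, so there is no in-text argument to compare against. Your proof is correct and follows the standard route used to establish noise-determination bounds in the PAC-privacy literature: the variational inequality $I(S;Y)\le\E_S[\KL(P_{Y\mid S}\|Q)]$ with a fixed Gaussian reference $Q=\Normal(\E[M(S)],U\Lambda_BU^\top)$, the same-covariance Gaussian KL formula reducing everything to an expected quadratic form, the trace identity $\E[(M(S)-\mu)^\top\Sigma^{-1}(M(S)-\mu)]=\mathrm{tr}(\Sigma^{-1}\var(M(S)))$, and the final cancellation $\tfrac12\sum_i\lambda_i/(\Lambda_B)_{ii}=B$. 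All the steps check out, including the cyclic-trace simplification $\mathrm{tr}\big((U\Lambda_BU^\top)^{-1}U\Lambda U^\top\big)=\mathrm{tr}(\Lambda_B^{-1}\Lambda)$, which uses $U$ orthogonal. Your handling of the rank-deficient case (restrict to the positive-eigenvalue subspace, or regularize by $\epsilon I$ and take $\epsilon\to 0$) is a genuine edge case in the formula as written and is worth stating; the one additional side condition to flag is that the argument implicitly uses independence of $Z$ from $S$, which is what makes $P_{M(S)+Z\mid S=s}=\Normal(M(s),U\Lambda_BU^\top)$. The only thing your verification does not do — and the statement does not ask for — is derive the particular $\sqrt{\lambda_i}$-proportional shape of $\Lambda_B$ as the trace-minimizing choice subject to the MI constraint, which is how the original source arrives at the formula; you simply plug it in and verify, which suffices.
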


\begin{remark}\label[remark]{rm:sampling_complexity}
\cref{thm:noise_cov} calibrates the Gaussian noise from the covariance of $M(S)$.
The covariance may be evaluated in its exact form, for example, when the secret space $\Sspace$ is tractable with size $m:=|\Sspace|$; indeed, for the instantiation considered in this paper, exact evaluation is feasible.
The cost is then dominated by evaluating $M(S)$ for all $S\in \Sspace$.
Once the $m$ outputs are collected, we optimize noise calibration by performing SVD directly on the weighted, centered $m\times d$ matrix in $O(md\cdot\min(m,d))$ time, avoiding explicit covariance formation. This decomposition also enables efficient noise sampling via linear transformation in $O(d^2)$ time. Consequently, the total complexity to calibrate and sample the noise is $O(mC+md\cdot\min(m,d)+d^2)$, where $C$ is the cost of a single evaluation of the function $M$. This is typically dominated by the evaluation term $mC$.

When $\Sspace$ is computationally intractable, one can \emph{estimate} the covariance matrix by Monte Carlo simulation, i.e., sampling $S$ from $P_S$ and evaluating $M(S)$ for $m'\ll m$ times in a black-box manner.
\citet{hanshen2023crypto} determines the sampling complexity $m'$ as a function of the budget $B$ and confidence $\gamma$. %
Alternatively, \citet{mayuri2025sp} uses the convergence of empirical variance as a practical criterion for sufficient sampling.

In both cases, the $m$ or $m'$ samples for noise calibration are independent of the realized secret $S$. Operationally, the curator samples a secret $S\sim P_S$, calibrates noise $\Sigma$ based on $P_S$, $M$, and $B$, and releases $M(S)+\Normal(0,\Sigma)$.
\end{remark}

\subsection{Membership Inference Attacks}\label{sec:bg_mia}

Membership inference attacks (MIAs)~\cite{carlini2022membership,shokri2017membership} are the de facto standard to empirically measure privacy leakage in ML. To provide a concrete understanding of the PAC privacy guarantees, we translate the MI budget $B$ into a guarantee on the MIA success rate. The standard MIA game can be formalized to match the PAC privacy framework as follows:

\begin{definition}[Membership Inference Attack~\cite{shokri2017membership,carlini2022membership}]
\label[definition]{def:mia}
Given a finite data pool $U=\{u_1, \cdots, u_n\}$, a processing mechanism $M$, and a distribution $P_S$ over subsets of $U$,
let $S\sim P_S$. 
An informed adversary $A$, knowing $(U, P_S, M)$ and observing $M(S)$, outputs a membership estimate $\hat{S} \subseteq U$. 
The \emph{individual membership inference success rate} for a user $u_i$ is defined as the probability that the adversary correctly determines their membership:
$$
1-\delta_{A, i} := \mathrm{Pr}_{S \sim P_S, \hat{S} \gets A(M(S))}[\mathbf{1}_{u_i \in S}=\mathbf{1}_{u_i \in \hat{S}}],
$$
where $\mathbf1_{E}$ is the indicator variable for event $E$.
\end{definition}

Membership inference for any user $u_i$ is then a specific instance of the attack criterion $\rho$ in \cref{def:pac}, so \cref{thm:post_adv_mi} applies directly.
Given an MI budget $B$, \cref{eq:delta<=mi} upper-bounds the adversary's success rate $(1-\delta_{A,i})$. In particular, if 
$P_S$ is specified such that $\Pr(u_i\in S)=1/2$ for all $i$ --- that is, every example in $U$ has a 50\% probability to be selected for training --- the prior success rate is $1-\delta_{0,i}=50\%$. Then a budget of $B=2^{-2}$ provably limits MIA success to 83.78\%; tightening to $B=2^{-10}$ reduces this bound to 52.21\%, and an extremely strict budget of $B=2^{-32}$ yeilds $\approx 50.001\%$, virtually indistinguishable from random guessing. These bounds apply simultaneously to every individual in $U$, since the prior success rate is uniform across the population.
For context, $(\epsilon,\delta)$-DP~\cite{dwork2014algorithmic} likewise provably bounds MIA success, by $1-(1-\delta)/(1+e^{\epsilon})$~\cite{mayuri2025sp}: $(1,10^{-5})$-DP limits MIA success to 73.11\%, while $(0.1, 10^{-5})$-DP reduces it to 52.50\%. \cref{fig:mi_to_mia} illustrates these upper bounds on MIA success rate across MI budgets and their corresponding DP parameters with matching MIA guarantees.
While the privacy notions differ semantically, we follow the literature~\cite{mayuri2025sp} to report these comparable DP parameters throughout the paper to serve as a familiar reference for our PAC privacy guarantees.
We defer more detailed discussions on DP to \cref{sec:related}.

\begin{figure}[t]
    \centering
    \includegraphics{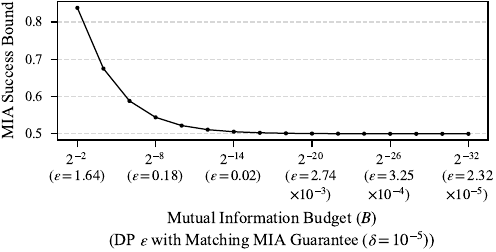}
    \caption{Provable upper bound on the individual MIA success rate as a function of the MI budget B, under a uniform prior success rate of 50\%. DP parameters with matching guarantees are showed in brackets for reference.}
    \label{fig:mi_to_mia}
\end{figure}

While we use MIA to demonstrate PAC privacy guarantees and follow the canonical literature~\cite{hanshen2023crypto,hanshen2024ccs,mayuri2025sp} to instantiate a subsampling-based input distribution --- $P_S$ ranges over subsets of $U$ --- the guarantee itself is MIA-specific. \cref{thm:post_adv_mi} is \emph{attack agnostic}: MI budget $B$ bounds the posterior advantage of \emph{any} inference attack on the training data, and can be cast to a concrete success bound after computing the prior success rate. Group MIA is one such case: its prior is lower than the individual 50\%, so the same budget $B$ yields a \emph{tighter} bound, whereas DP must resort to group privacy via composition. We discuss various privacy guarantees for our instantiation in \cref{sec:privacy_guarantees}.

\section{Adversarial Composition}\label{sec:composition}

The PAC privacy guarantees of \cref{sec:background} concern a single release; enforcing PAC privacy on a sequence of releases requires understanding how privacy leakage \emph{composes}. Furthermore, many practical settings are \emph{adaptive}: in stochastic gradient descent (SGD), each gradient step depends on the current parameters, hence on all prior outputs; and in private prediction, queries are additionally \emph{adversarial}, since a user may choose future queries based on observed outputs to maximize leakage. When the query sequence depends on the full interaction history, later releases can amplify information from earlier ones even if each release is individually PAC-private, and na\"ively composing per-step guarantees can lead to incorrect bounds on the total information leakage.

Prior composition in PAC privacy attains linear accumulation only under restrictive settings --- a sequence of non-adaptive, predetermined queries~\cite{hanshen2023crypto}, or independent resampling of a fresh secret at each step~\cite{hanshen2025thesis}. Neither fits private prediction, where queries are adaptive and adversarial and the secret training set \emph{persists} across releases. In this setting, the existing composition result either grows quadratically in $T$ or reverts to input-independent, DP-style noise~\cite{hanshen2025thesis}.

In this section, we formalize an adversarial composition setting for PAC privacy where the secret is persistent across releases and each mechanism may be adaptively chosen via a strategy \emph{unknown} to the data curator. We then present a composition algorithm that calibrates noise \emph{adaptively} to the interaction history and prove that the mutual information guarantee still accumulates \emph{linearly} despite adaptivity.

\subsection{Setting}\label{sec:composition_setting}
Following \cref{sec:background}, let random variable $S\in \Sspace$ represent the secret input data. Let $t=1,2,\ldots,T$ be discrete time steps. At initialization ($t=0$), the data curator samples the secret $S\sim P_S$.
At each time step $t$, an adversary, who wishes to gain knowledge about $S$, provides the data curator with a processing function $M_t:\Sspace\to\Rspace_t$, where $\Rspace_t=\Realdt$. In response, the data curator releases $R_t\in\Rspace_t$, a possibly noisy version of $M_t(S)$. We define random variable $\Tau_t=\{(M_i,R_i)\}_{i=1}^t$ as the full \emph{interaction history} up to time step $t$. It comprises both the adversary's chosen mechanisms and the data curator's releases, with $\Tau_0=\emptyset$. 

The adversary is \emph{adaptive}, whose choices are defined by a sequence of unknown strategies $\{F_t\}_{t=1}^T$.
At time step $t$, the adversary observes the history $\Tau_{t-1}$ and %
samples the next query $M_t\sim F_t(\Tau_{t-1})$, where $M_t:\Sspace\to\Realdt$.
Given history $\Tau_{t-1}$, the adversarial strategy $F_t$ defines a distribution over all possible functions from $\Sspace$ to $\Realdt$ and
is unknown to the data curator.
Our only assumption is that the adversary has no side-channel access to the secret: while a strong, informed adversary knows the distribution $P_S$ of the secret, it can learn about the realized $S$ only through the history.
In particular, the choice of the next query function $M_t$ depends on $S$ only via the history $\Tau_{t-1}$.
Formally, 
\begin{equation}\label{eq:adv_ass}
M_t\perp S \mid \Tau_{t-1}.
\end{equation}

We emphasize that the curator evaluates every $M_t$ on the \emph{same} secret input $S$ drawn at $t=0$, rather than resampling. This persistence is essential for meaningful privacy analysis.
Recall the previous MIA example where $P_S$ defines a distribution over subsets of $U$ s.t. $\forall u\in U \,P(u\in S)=1/2$.
In the context of ML, MIA adversaries are interested in knowing if a particular example $u\in U$ is \emph{ever} used across all the releases. If the curator re-samples at each step, the probability that an arbitrary $u$ is included in \emph{at least one} sampled subset rapidly approaches 1 as $T$ increases. A prior success rate near 100\% renders any bound on the posterior advantage vacuous.
By fixing $S$, we ensure that the input distribution of the overall mechanism remains $P_S$, and hence the prior MIA success rate remains constant, i.e., 50\%. 
Thus, any privacy loss is strictly due to leakage from releases, rather than increased prior success rates.
Persistent secret presents a greater challenge to bound the overall leakage compared to independent re-sampling: the adversary can aggregate information from the entire history $\Tau_T$ to refine their knowledge about the single, persistent secret $S$.

\subsection{Posterior-Aware Composition}\label{sec:pacpac}

Our goal is to bound the total mutual information leakage of the overall mechanism accumulated over $T$ steps, i.e., $I(S;R_1,\ldots, R_T)$. By \cref{thm:post_adv_mi}, this then bounds the posterior advantage and consequently, the adversarial success probability. By the chain rule of MI, the overall MI decomposes into the sum of conditional MIs: $$\textstyle I(S;R_1,\ldots, R_T)=\sum_{t=1}^T I(S;R_t\mid R_1,\ldots,R_{t-1}).$$ A natural approach is to bound each term by a per-step mutual information budget $b_t$. However, this is not trivial.

One straightforward attempt is to apply \cref{thm:noise_cov} at every step $t$ to calibrate noise scale $\Sigma_t$ for the mechanism $M_t$ such that $I(S;R_t)\leq b_t$ where $R_t=M_t(S)+\Normal(0,\Sigma_t)$. However, this is not sufficient as it fails to account for the dependencies between releases. For instance, when privatizing two adaptively chosen mechanisms, the total MI is $I(S;R_1,R_2)=I(S;R_1)+I(S;R_2\mid R_1)$, while \cref{thm:noise_cov} only enforces $I(S;R_1)\leq b_1$ and $I(S;R_2)\leq b_2$ independently for $S\sim P_S$. Even if the marginal leakage $I(S;R_2)$ is small, the conditional leakage $I(S;R_2\mid R_1)$
remains unbounded
because the adversary may choose $M_2$ adaptively to exploit the information revealed by $R_1$.

One theoretical workaround~\cite{hanshen2025thesis} is to enforce a stronger worst-case bound for each release, i.e., calibrate noise scale $\Sigma_t$ to bound $\sup_{Q} I_{S\sim Q}(S;M_t(S)+\Normal(0,\Sigma_t))\leq b'_t$ and $I(S;M_t(S)+\Normal(0,\Sigma_t))\leq b_t$ simultaneously for each $t$. 
The former constraint directly bounds the conditional MI given any possible history, effectively enforcing an \emph{input-independent, worst-case guarantee}, similar to that of local differential privacy~\cite{duchi2013ldp}.
However, as detailed in \cref{app:static_composition}, this forces a harsh trade-off: one must either accept a composition bound that scales \emph{quadratically} in $T$, or add excessive noise for a DP-like input-independent guarantee that ignores the instance-specific stability. This negates the core benefits of PAC privacy.

The fundamental issue of this \emph{posterior-oblivious composition} comes from \emph{noise misspecification}. For an informed adversary, its belief on the distribution of $S$ evolves (\cref{fig:belief_tracking}):
before observations, it knows $S$ is sampled from $P_S$ as a prior belief, often uniform over $\Sspace$.
After observing the history $\Tau_{t-1}$, a rational adversary would be tracking the posterior distribution $P_{S\mid \Tau_{t-1}}$ based on the observed history to refine their knowledge of the secret.
Calibrating noise to $P_S$ at every step therefore protects only against an adversary who has observed nothing, rather than one who has already learned from the interactions. 
To bound the conditional leakage $I(S;R_t\mid \Tau_{t-1})$ tightly, the noise must be calibrated \emph{adaptively} with respect to the \emph{current} posterior, effectively responding to the adversary's evolving knowledge.

To formalize this, we first define a helper function that determines the Gaussian noise required to bound the leakage of a specific processing function \emph{under a specific input distribution}. This extends the noise determination mechanism from \cref{thm:noise_cov} by treating the noise scale as a function of the input distribution as well.

\begin{definition}[Noise Calibration Function $\Sigma$]\label[definition]{def:noise}
A function $\Sigma:\mathcal P(\mathcal S)\times(\Reald)^\Sspace\times\Real_{>0}\to \Real^{d\times d}$ is said to be a noise calibration function if the following bound holds for any distribution $P$ over $\Sspace$, deterministic function $M:\Sspace\to\Reald$, and positive real number $B$: $$I_{S\sim P}(S;M(S)+\Normal(0,\Sigma(P,M,B)))\leq B.$$%
We can instantiate such a function $\Sigma$ using \cref{thm:noise_cov}.
\end{definition}

\begin{figure}[t]
\centering
\includegraphics{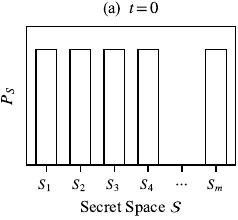}%
\hfill
\includegraphics{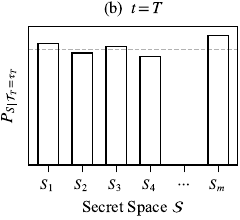}
\caption{An informed adversary refines their knowledge by tracking the posterior distribution of the secret conditioned on the observed history. In response, the curator adaptively calibrates noise to bound the conditional MI given the current history. Equivalently, this bounds the expected KL divergence between the posterior distribution and the prior $P_S$.}
\label{fig:belief_tracking}
\end{figure}

Equipped with this definition, we present an algorithm that allows the curator to achieve \emph{posterior-aware composition} via Bayesian posterior updates in \cref{alg:pacpac}.
The mechanism explicitly maintains a belief state $P_t$, which represents the posterior distribution of the secret $S$ given the current history $\Tau_t$.
At $t=0$, the curator samples the secret $S\sim P_S$ and initializes the belief state, identical to the prior distribution $P_0=P_S$. 
At each step, the adversary chooses a query $M_t$, via its querying strategy $F_t$ that is unknown to the curator.
Upon receiving $M_t$, the curator calculates the noise covariance $\Sigma_t=\Sigma(P_{t-1},M_t,b_t)$ required to satisfy the privacy budget for query function $M_t$ under the \emph{current belief} $P_{t-1}$.
Sampling the additive noise from $\Normal(0,\Sigma_t)$, the curator releases the noisy response $M_t(S)+\Normal(0,\Sigma_t)$, and then updates the belief state using Bayes' rule:
\begin{equation}\label{eq:p_upadte}
\begin{aligned}
&P_t(s) \propto  P_{t-1} (s)\\
&\quad\cdot \exp\left(-\frac{1}{2} \left(R_t - M_t(s)\right)^\top \Sigma_t^{-1} \left(R_t - M_t(s)\right)\right).
\end{aligned}
\end{equation}

\begin{algorithm}[t]
\caption{Posterior-Aware Composition}\label{alg:pacpac}
\begin{algorithmic}[1]
\State \textbf{Input:} Distribution $P_S$, per-step MI budgets $\{b_t\}_{t=1}^T$, noise calibration function $\Sigma$, horizon $T$. The adversary holds strategies $\{F_t\}_{t=1}^T$ unknown to the curator.
\State \textbf{Initialize:} 
\State \quad Initial belief $P_0 \leftarrow P_S$
\State \quad History $\Tau_0 \leftarrow \emptyset$
\State \quad Sample secret $S\sim P_S$
\State \textbf{for} $t=1,2,\ldots,T$ \textbf{do:}
    \State \quad \textbf{Adversary step:}
    \State \qquad Adversary samples $M_t:\Sspace\to\Realdt$ from $F_t(\Tau_{t-1})$
    
    \State \quad \textbf{Adaptive noise calibration:}
    \State \qquad $\Sigma_t \leftarrow \Sigma(P_{t-1}, M_t, b_t)$

    \State \quad \textbf{Release:}
    \State \qquad Release $R_t \leftarrow M_t(S) + \Normal(0,\Sigma_t)$
    
    \State \quad \textbf{Update:}
    \State \qquad Update history $\Tau_t \leftarrow (\Tau_{t-1}, (M_t,R_t))$
    \State \qquad Update posterior belief conditioned on $\Tau_t$: $\forall s\in\Sspace$,
    \State \qquad\quad $P_t(s) \propto \exp(-\frac{1}{2} (R_t - M_t(s))^\top \Sigma_t^{-1} (R_t - M_t(s)))\cdot P_{t-1} (s)$
\State \textbf{end for}
\end{algorithmic}
\end{algorithm}

We now analyze its privacy and complexity.

\subsubsection{PAC Privacy Guarantee} We show that under \cref{alg:pacpac}, the per-step MI bounds compose linearly to bound the overall MI. This matches the non-adaptive setting~\cite{hanshen2023crypto} despite adversarial adaptivity, as well as the resampling setting~\cite{sridhar2026pac} despite persistency. First, we prove that the belief state $P_t$ maintained by the algorithm via \cref{eq:p_upadte} correctly tracks the true posterior of $S$, which is the same distribution maintained by an informed and optimal adversary.

\begin{lemma}[Validity of Posterior Update]\label[lemma]{lm:post} Let $\Tau_t$ be the random variable representing the history up to time $t$ generated by \cref{alg:pacpac}. For any time step $0\leq t\leq T$ and any history $\tau_t$ realizable at $t$, when conditioned on $\Tau_{t}=\tau_{t}$, the belief state $P_t$ maintained by the curator is the same as the posterior distribution of the secret maintained by an informed and optimal adversary: $P_t=P_{S\mid\Tau_t=\tau_t}$.
\end{lemma}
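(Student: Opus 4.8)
The plan is to prove \cref{lm:post} by induction on $t$, exploiting the key structural fact that \cref{alg:pacpac} constructs the belief update \cref{eq:p_upadte} to be exactly one step of Bayesian conditioning. For the base case $t=0$, both $P_0$ and $P_{S\mid \Tau_0}$ equal the prior $P_S$ by definition (the empty transcript carries no information), so the claim holds trivially.

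For the inductive step, I would fix a realizable transcript $\tau_{t} = (\tau_{t-1}, (M_t, R_t))$ and assume $P_{t-1} = P_{S\mid \Tau_{t-1}=\tau_{t-1}}$. I would then compute the true posterior $P_{S\mid \Tau_t = \tau_t}(s)$ using Bayes' rule, writing it as proportional to the likelihood of observing $(M_t, R_t)$ given $S=s$ and $\Tau_{t-1}=\tau_{t-1}$, times the prior $P_{S\mid \Tau_{t-1}=\tau_{t-1}}(s) = P_{t-1}(s)$. The likelihood of the new observation factors into two pieces: the probability that the adversary issues query $M_t$, and the density of the curator's release $R_t$. For the first piece, the crucial point is the no-side-channel assumption \cref{eq:adv_ass}, which says $M_t \perp S \mid \Tau_{t-1}$; since $M_t \sim F_t(\tau_{t-1})$ depends only on $\tau_{t-1}$ and not on $s$, this factor is a constant in $s$ and drops into the normalization. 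For the second piece, conditioned on $S=s$, $M_t$, and $\Sigma_t$ (which is itself a deterministic function of $P_{t-1}, M_t, b_t$, hence determined by $\tau_{t-1}$ and $M_t$), the release is $R_t = M_t(s) + \Normal(0,\Sigma_t)$, so its density at $R_t$ is proportional to $\exp(-\tfrac12 (R_t - M_t(s))^\top \Sigma_t^{-1}(R_t - M_t(s)))$. Multiplying this Gaussian factor by $P_{t-1}(s)$ and renormalizing gives precisely \cref{eq:p_upadte}, so $P_t = P_{S\mid \Tau_t = \tau_t}$, completing the induction.

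The part requiring the most care is the bookkeeping of what each factor is conditioned on when decomposing the joint likelihood — specifically verifying that $\Sigma_t$ is measurable with respect to the already-revealed information $(\tau_{t-1}, M_t)$ and therefore does not introduce any hidden dependence on $S$ that would break the clean Gaussian form, and that the adversary-query factor $F_t(\tau_{t-1})(M_t)$ is genuinely independent of $s$ by \cref{eq:adv_ass}. I would be explicit that "informed and optimal adversary" here means one who performs exact Bayesian updating with full knowledge of $P_S$ and the mechanism; the lemma is then the statement that the curator, by running the same update, holds the same posterior. Everything else is routine manipulation of conditional densities, so I would not belabor it.
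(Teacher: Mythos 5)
Your proposal is correct and follows essentially the same route as the paper's proof: induction on $t$, Bayes' rule to factor the one-step likelihood into the adversary's query factor and the curator's Gaussian release factor, the no-side-channel assumption \cref{eq:adv_ass} to drop the query factor as constant in $s$, and matching the remaining Gaussian-times-prior term to \cref{eq:p_upadte}. The only addition you make beyond the paper's argument is to explicitly note that $\Sigma_t$ is measurable with respect to $(\tau_{t-1}, M_t)$, which the paper treats implicitly by stating $\Sigma_t$ is "known"; this is a useful clarification but not a different approach.
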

\begin{proof} This is a Bayesian inference step. Full proof via induction is deferred to \cref{app:lm_proof}.
\end{proof}

We now prove the main result:
we bound the conditional MI at each step exactly by $b_t$, allowing linear composition.

\begin{theorem}[Adversarial Composition]\label{thm:composition}
Fix a prior $P_S$ over the secret space $\mathcal S$ and a horizon $T \in \mathbb{N}$.
Let $(R_1,\dots,R_T)$ be the sequence of releases produced by \cref{alg:pacpac}.
Then, the mutual information between the secret input $S$ and the sequence $(R_1,\ldots,R_T)$ released by \cref{alg:pacpac} satisfies
$
I(S ; R_1,\dots,R_T)\leq B_T := \sum_{t=1}^T b_t$.
\end{theorem}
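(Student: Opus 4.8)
The plan is to bound the larger quantity $I(S;\Tau_T)$, noting that the releases $R_1,\dots,R_T$ form a deterministic sub-vector of the transcript $\Tau_T$, so $I(S;R_1,\dots,R_T)\le I(S;\Tau_T)$. Applying the chain rule for mutual information along the natural filtration of the interaction gives $I(S;\Tau_T)=\sum_{t=1}^T I\bigl(S;(M_t,R_t)\mid\Tau_{t-1}\bigr)$, and each summand splits as $I(S;M_t\mid\Tau_{t-1})+I(S;R_t\mid\Tau_{t-1},M_t)$. The first term vanishes: the no-side-channel assumption \cref{eq:adv_ass} states exactly $M_t\perp S\mid\Tau_{t-1}$, hence $I(S;M_t\mid\Tau_{t-1})=0$. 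It then remains to show $I(S;R_t\mid\Tau_{t-1},M_t)\le b_t$ for every $t$; summing over $t$ yields the claimed bound $B_T=\sum_{t}b_t$.

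To bound the per-step conditional term I would condition on a fixed realization $\Tau_{t-1}=\tau_{t-1}$, $M_t=m_t$ and argue pointwise, then take expectations. First, \cref{lm:post} identifies the conditional law of the secret given the transcript as the curator's belief state: $P_{S\mid\Tau_{t-1}=\tau_{t-1}}=P_{t-1}$. Second, invoking \cref{eq:adv_ass} a second time, further conditioning on $M_t=m_t$ does not change this law, so $P_{S\mid\Tau_{t-1}=\tau_{t-1},\,M_t=m_t}=P_{t-1}$ as well. Third, by construction of \cref{alg:pacpac} the covariance $\Sigma_t=\Sigma(P_{t-1},M_t,b_t)$ is a deterministic function of $(\Tau_{t-1},M_t)$, since $P_{t-1}$ is itself a deterministic function of $\Tau_{t-1}$ via the update rule \cref{eq:p_upadte}; hence under this conditioning $\Sigma_t$ is a fixed matrix and the release is exactly $R_t=m_t(S)+\Normal(0,\Sigma_t)$ with $S\sim P_{t-1}$ and fresh independent Gaussian noise. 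Therefore $I(S;R_t\mid\Tau_{t-1}=\tau_{t-1},M_t=m_t)=I_{S\sim P_{t-1}}\bigl(S;m_t(S)+\Normal(0,\Sigma_t)\bigr)\le b_t$ by the defining property of the noise calibration function in \cref{def:noise}, applied with $P=P_{t-1}$, $M=m_t$, $B=b_t$. Averaging over $(\Tau_{t-1},M_t)$ preserves the inequality, and summing over $t$ completes the proof.

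Almost all of the substantive content is carried by \cref{lm:post}; the theorem is then essentially bookkeeping. The step I expect to need the most care is the double use of the conditional-independence assumption \cref{eq:adv_ass}: once to kill $I(S;M_t\mid\Tau_{t-1})$, and once to ensure that conditioning on the adaptively (and possibly adversarially) chosen query $M_t$ leaves the posterior of $S$ unchanged, which is what makes the calibrated noise $\Sigma_t$ matched to the true conditional law of $S$ at the moment of release. One should also be explicit about the measurability point that $P_{t-1}$, and therefore $\Sigma_t$, depends on $\Tau_{t-1}$ alone and not on the realized $S$, so that conditioning genuinely reduces the per-step term to a single-release PAC-privacy instance to which \cref{def:noise} applies verbatim.
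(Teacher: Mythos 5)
Your proof is correct and follows essentially the same route as the paper's: bound $I(S;R_{1:T})$ by $I(S;\Tau_T)$, apply the chain rule, kill $I(S;M_t\mid\Tau_{t-1})$ via \cref{eq:adv_ass}, then condition on $(\tau_{t-1},m_t)$, invoke \cref{lm:post} and a second application of \cref{eq:adv_ass} to match the posterior to the curator's belief state, and finish with \cref{def:noise}. The organization and the two invocations of the conditional-independence assumption mirror the paper's argument exactly.
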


\begin{proof}
We prove $I(S ; R_1,\dots,R_T)\leq I(S;\Tau_T)\leq B_T$. Full proof is deferred to \cref{app:composition_proof}.
\end{proof}

\begin{remark}

\cref{thm:composition} also bounds the expected KL divergence between the adversary's posterior belief of the secret's distribution and the prior distribution $P_S$: $
\E_{\tau_T}[\KL(P_{S\mid\Tau_T=\tau_T} \parallel P_S)]=I(S;\Tau_T)\leq B_T$.
Intuitively, this limits the adversary's power to refine its knowledge on $S$ and perform inference attacks on $S$ (\cref{fig:belief_tracking}).
\end{remark}

\cref{thm:composition} shows that when the noise calibration is adaptive to the interaction history, the per-step MI budgets $\{b_t\}$ compose linearly to a total bound of $B_T=\sum_{t=1}^T b_t$. Consequently, under a uniform budget allocation where $b_t=b$ for all $t$, the total MI after $T$ releases is upper bounded by $B_T=bT$. We translate these MI bounds into concrete attack-success guarantees in \cref{sec:privacy_guarantees}.

\begin{remark}[Linearity]
Our bound accumulates linearly rather than sublinearly as in advanced composition for approximate DP~\cite{dwork2010boosting}. This is intrinsic to controlling an \emph{information} quantity: MI is an expected KL divergence and composes \emph{additively} by the chain rule. %
Similar information-based guarantees also compose linearly:
R\'enyi-DP (RDP)~\cite{mironov2017renyi} and zero-concentrated DP (zCDP)~\cite{bun2016concentrated} do so in their own parameters, appearing sublinear only after conversion to $(\epsilon,\delta)$-DP. Linear accumulation is thus expected for our MI-based PAC privacy guarantee. In the persistent-secret, adaptive, and adversarial setting, it is a strict improvement over prior composition results in PAC privacy (\cref{app:static_composition}).
\end{remark}

\subsubsection{Computational Complexity}\label{sec:complexity}
While \cref{alg:pacpac} is theoretically valid for any secret space $\Sspace$, we first analyze its computational complexity when $m=|\Sspace|$ is finite.
As discussed in \cref{rm:sampling_complexity}, for a single release, noise calibration and sampling requires $O(mC+md\cdot\min(m,d)+d^2)$ time, dominated by the evaluation term $O(mC)$. For applications in this paper, $d<m$, and this cost becomes $O(mC+md^2)$.
\cref{alg:pacpac} introduces an additional step of posterior update for each release, which reuses the eigen-decomposition obtained during noise calibration.
This allows us to avoid explicit matrix inversion in \cref{eq:p_upadte} and compute the likelihoods via efficient vector projections in $O(md^2)$ time.
Consequently, the belief-tracking step in \cref{alg:pacpac} incurs \emph{no additional asymptotic overhead}, while being negligible compared to the $O(mC)$ cost for $m$ evaluations.

When $|\Sspace|$ is intractable (e.g., infinite), maintaining the full posterior distribution $P_t$ is not feasible. However, $P_t$ is only used for noise calibration to evaluate the covariance of the output distribution $M(S)$ under $S\sim P_{t-1}$. 
Thus, we can \emph{estimate} such covariance under $P_{t-1}$ via sampling (cf. \cref{rm:sampling_complexity}).
We leave the intractable case to future work.

\begin{figure*}[t]
\centering
\includegraphics[width=\linewidth,trim={2pt 3pt 0pt 0pt},clip]{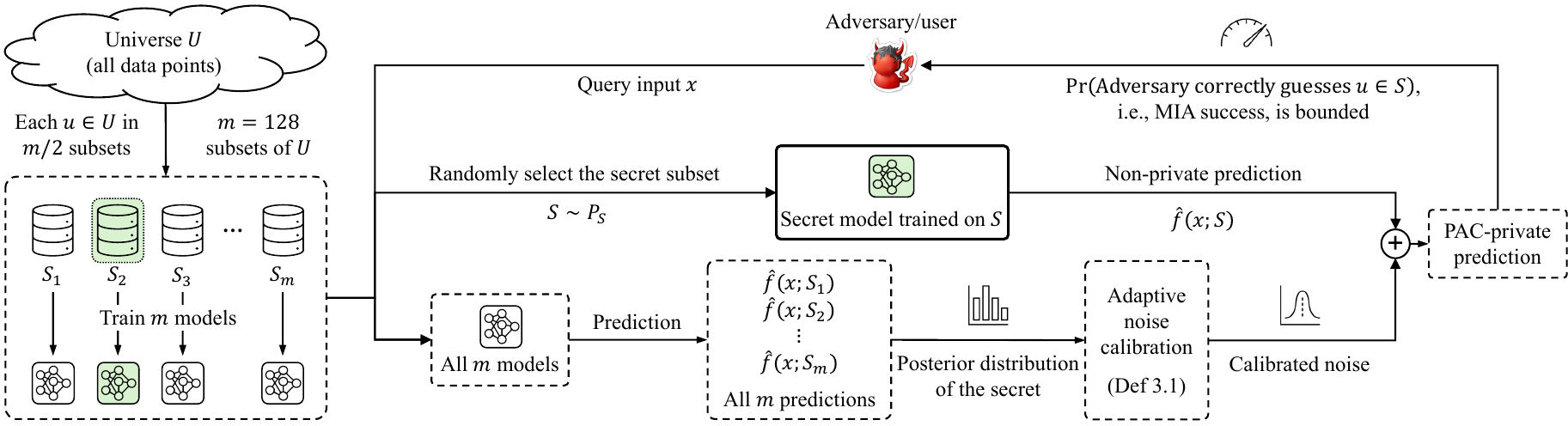}
\caption{An overview of our proposed framework to release PAC-private machine learning predictions to an adversary's queries.}
\label{fig:system}
\end{figure*}

\section{PAC-Private Machine Learning Predictions}\label{sec:private_responses}

While \cref{alg:pacpac} is applicable to a wide range of applications (such as SGD), we now focus on private prediction and instantiate \cref{alg:pacpac} for PAC-private predictions made by a ML model trained on sensitive data. \cref{fig:system} illustrates an overview for our proposed system.

\subsection{Setting}

Consider a supervised learning setting for a classification task with $d$ classes. Let $U =\{(x_i,y_i)\}_{i=1}^n\subset\mathcal X\times \mathcal Y$ denote the universe of all possible data points, where $\mathcal{X}$ is the input space and $\mathcal{Y} = \{1, \dots, d\}$ is the label space.
As discussed in \cref{sec:bg_mia}, we employ a \emph{subsampling-based} input distribution to enable provable defense against MIA, following the standard practice in PAC privacy ~\cite{hanshen2023crypto,hanshen2024ccs,mayuri2025sp}.
Specifically, the secret input $S$ is a random subset of $U$ drawn from a distribution $P_S$ over $\Sspace$, a collection of subsets of $U$. $P_S$ is specified such that for each individual data point $(x,y)\in U$, the probability that $(x,y)$ is included in $S$ is 50\%. This establishes a balanced membership inference game with a uniform prior success rate of 50\%. The sampled secret input dataset $S$ is then used to train an underlying ML model that generates predictions on user queries. We reiterate that the same secret set $S$ is used for all releases.

\subsubsection{Threat Model}
We assume a \emph{strong adversary} with unbounded compute. This adversary possesses full knowledge of the universe $U$, the secret space $\Sspace$, the prior distribution $P_S$, and the exact training and inference algorithms used by the curator, but not the realized secret dataset $S$.

At each step $t$, the adversary submits a query $q_t\in\mathcal X$, chosen adaptively from the interaction history and, in the worst case, adversarially to maximize what the responses reveal about $S$. The curator runs inference with the model trained on $S$ and responds with a prediction, potentially noised for PAC privacy. The adversary's goal is to infer the secret $S$: most canonically, the membership of a target $u^*\in U$, but also other objectives such as reconstructing a training example or recovering the trained model. We analyze these attacks in greater detail in \cref{sec:privacy_guarantees}.

We note that this threat model assumes a potent adversary with complete knowledge of the system except for the realized secret dataset while being computationally unbounded.
In practice, a realistic adversary will have bounded computational resources and is unlikely to know $P_S$ or the training/inference algorithms used by the curator.
Nevertheless, 
security established under the potent adversary model naturally extends to the realistic setting.

\subsection{Concrete Instantiation}

As established in \cref{sec:background}, PAC privacy bounds the success of an informed adversary by limiting MI between the secret input $S$ and the outputs. Under adaptive and adversarial mechanism selection, \cref{alg:pacpac} ensures the total leakage remains bounded.
We now instantiate this algorithm for PAC-private machine learning predictions.

\subsubsection{Input Distribution}\label{sec:input_discribution}
To instantiate \cref{alg:pacpac}, we must first specify the distribution $P_S$ from which the secret dataset $S$ is sampled.
A canonical choice for $P_S$ is the uniform distribution over all $2^n$ subsets of $U$, i.e., Poisson subsampling where each point in $U$ is selected independently with probability 0.5.
However, the exponential size of this support renders exact noise calibration and posterior belief update computationally intractable.
Instead, following prior work~\cite{mayuri2025sp}, we adopt a tractable construction where the support is restricted to a finite collection of $m$ subsets, i.e., $\Sspace=\{S_1,\ldots,S_m\}$. The prior $P_S$ is then defined as the uniform distribution over $\Sspace$.
While \citet{mayuri2025sp} generated $\Sspace$ via random half-partitioning to create $m/2$ pairs of disjoint sets, we introduce a novel construction to better approximate the canonical Poisson setting.
In particular, for each $u\in U$, we randomly select $m/2$ indices from $\{1,2,\ldots,m\}$ and assign $u$ to the corresponding subsets.
Crucially, this ensures that the marginal distribution of $S \sim P_S$ matches Poisson subsampling with $p=0.5$, when considering the randomness of $\mathcal S$.
The only distinction is that the security game is now played over a reduced support of size $m$,
rather than over an intractable space of all $2^n$ subsets.
While any even $m\geq2$ yields a 50\% balanced prior success rate for the MIA game, the choice of $m$ affects an informed adversary's prior for some other attacks (details deferred to \cref{sec:privacy_guarantees}). In particular, the value $1/m$ is a \emph{prior floor} for any attack, and we set $m=128$ to keep this rate below 1\% for an informed adversary who knows $\mathcal S$ and $P_S$.
This enables efficient and exact execution of \cref{alg:pacpac}.

\subsubsection{Mechanism Definition}\label{sec:mechnism_instantiation}
At time step $t$, the adversary observes the interaction history up to time $t$ and submits a query input $q_t \in \mathcal{X}$ to the curator. We formally define the resulting mechanism $M_t: \mathcal{S} \to \mathbb{R}^d$ as $M_t(\,\cdot\,) = \hat{f}(q_t; \,\cdot\,)$, where $\hat{f}(x; D)$ denotes the prediction on $x \in \mathcal{X}$ made by a model trained on dataset $D \subseteq \mathcal{X} \times \mathcal{Y}$, which is a $d$-dimensional vector corresponding to the $d$ classes.

Recall that \cref{thm:noise_cov} and \cref{def:noise} calibrate noise to privatize deterministic functions $M_t$; that is, the only source of randomness in the un-noised output $M_t(S)$ is the selection of the secret $S$ itself.
To achieve this, we explicitly derandomize the training and inference algorithms by fixing random seeds prior to execution. Note that these seeds can be made public without compromising the security game, as we consider a strong adversary that knows the learning and prediction algorithms, whose uncertainty lies solely in the realization of the secret $S$.

We emphasize that aside from being deterministic, we impose no restrictions on the underlying training or inference algorithms. Our framework treats training as a general mechanism $\hat f$, which is agnostic to the choice of model architecture or optimization strategy. Any supervised learning algorithm can be employed to define $\hat{f}$; crucially, this definition extends beyond simple model fitting. The function $\hat f$ may encapsulate an entire learning pipeline, including hyperparameter tuning, architecture search, or automated model selection, provided these steps rely solely on $S$.

\subsubsection{Computational Complexity}\label{sec:4_complexity}
With $P_S$ and $M_t$ defined, \cref{alg:pacpac} is fully instantiated. 
As discussed in \cref{sec:complexity}, a direct execution of the algorithm costs $O(mC+md^2)$ per query, where $C$ is the cost of evaluating $M_t(S)$ for a single $S\in \Sspace$.
In our setting, $M_t(S)=\hat f(q_t;S)$ involves training a model on dataset $S$ and then performing inference on $q_t$.
Training $m$ models \textit{for each query} is computationally prohibitive for interactive systems.

However, we can significantly optimize this process by moving the training phase offline. Since the secret space $\Sspace=\{S_1,\ldots,S_m\}$ is fixed at initialization, we can train all $m$ models $\{\hat f_1,\ldots,\hat f_m\}$ in advance as pre-processing, where $\hat f_i:\Xspace\to\Reald$ is the model trained on $S_i$. During online interaction, evaluating the covariance matrix of $M_t$ reduces to running inference on the trained models, bringing the online per-query complexity down to $O(mC_{\text{infer}}+md^2)$, where $C_{\text{infer}}$ is the cost of performing inference on a single model.
Further, the inference calls $\hat f_i(q_t)$ for $i=1,\ldots,m$ are embarrassingly parallel, reducing the per-query wall-clock latency to $O(C_{\text{infer}}+md^2)$ given sufficient computational resources. For context, inference costs $O(C_{\text{infer}})$ without privacy, so our PAC-private mechanism incurs an overhead of only $O(md^2)$ per query in terms of wall-clock latency, which is negligible compared to modern ML inference.

\subsubsection{Output Stabilization}
A classifier $\hat f$ can output \emph{soft} predictions (class probabilities, e.g.\ softmax) or \emph{hard} predictions (one-hot labels). Softmax scores carry more information but are unstable across models trained on slightly different data even when the predicted class agrees, so privatizing them demands larger noise and sacrifices utility. One-hot labels are far more stable: on an easy query most models agree, giving almost identical outputs across $S\in\Sspace$, near-zero output variance, and hence little noise to meet the privacy budget. We therefore privatize hard predictions: we add noise to the one-hot prediction to obtain $R_t$ and release $\argmax_i (R_t)_i$. This discrete structure also enables the confidence filter for distillation (\cref{sec:distillation}).

\subsection{Privacy Guarantees}\label{sec:privacy_guarantees}

\begin{table}[t]
\centering
\caption{PAC privacy guarantees on adversarial success rates (\%) under different priors after up to a million queries across different levels of per-query MI budget $b$.}
\label{tab:n_to_delta}
\setlength{\tabcolsep}{4.3pt}
\vspace{-.5\baselineskip}
(a) Prior = 1/2

\vspace{.5\baselineskip}
\begin{tabular}{lrrrrrrrr}
\toprule
$b$ & $2^{-4}$ & $2^{-8}$ & $2^{-12}$ & $2^{-16}$ & $2^{-20}$ & $2^{-24}$ & $2^{-28}$ & $2^{-32}$ \\
\midrule
$T=1$ & 67.5 & 54.4 & 51.1 & 50.3 & 50.07 & 50.02 & 50.00 & 50.00 \\
$T=10^{2}$ & 100 & 91.0 & 61.0 & 52.8 & 50.69 & 50.17 & 50.04 & 50.01 \\
$T=10^{4}$ & 100 & 100 & 100 & 76.9 & 56.89 & 51.73 & 50.43 & 50.11 \\
$T=10^{6}$ & 100 & 100 & 100 & 100 & 100 & 67.09 & 54.31 & 51.08 \\
\bottomrule
\end{tabular}

\vspace{\baselineskip}

(b) Prior = 1/128

\vspace{.5\baselineskip}

\begin{tabular}{lrrrrrrrr}
\toprule
$b$ & $2^{-4}$ & $2^{-8}$ & $2^{-12}$ & $2^{-16}$ & $2^{-20}$ & $2^{-24}$ & $2^{-28}$ & $2^{-32}$ \\
\midrule
$T=1$ & 5.6 & 1.7 & 1.0 & 0.8 & 0.79 & 0.78 & 0.78 & 0.78 \\
$T=10^{2}$ & 100 & 17.5 & 3.4 & 1.3 & 0.91 & 0.81 & 0.79 & 0.78 \\
$T=10^{4}$ & 100 & 100 & 63.8 & 9.4 & 2.28 & 1.10 & 0.86 & 0.80 \\
$T=10^{6}$ & 100 & 100 & 100 & 100 & 32.55 & 5.42 & 1.66 & 0.98 \\
\bottomrule
\end{tabular}

\end{table}

\begin{table}[t]
\centering
\caption{Number of PAC-private queries with per-query MI budget $b$ before the resulting upper bound on MIA success rate matches that of $(\epsilon,\delta)$-DP.}
\label{tab:dp_t}
\setlength{\tabcolsep}{3.3pt}
\begin{tabular}{lrrrrrrrr}
\toprule
$b$ & $2^{-4}$ & $2^{-8}$ & $2^{-12}$ & $2^{-16}$ & $2^{-20}$ & $2^{-24}$ & $2^{-28}$ & $2^{-32}$ \\
\midrule
$(1,10^{-5})$-DP & 1 & 28 & 454 & 7K & 116K & 2M & 30M & 477M \\
$(2,10^{-5})$-DP & 5 & 83 & 1K & 21K & 344K & 5M & 88M & 1B \\
$(4,10^{-5})$-DP & 9 & 154 & 2K & 40K & 632K & 10M & 162M & 3B \\
\bottomrule
\end{tabular}
\end{table}

Privacy guarantees for arbitrary attacks follow the same recipe. Cast the attack as a criterion $\rho$ (\cref{def:pac}) and let $\bar{\delta}_0=1-\delta_0$ be the optimal success rate of an informed adversary under our input distribution $P_S$. Running \cref{alg:pacpac} with per-step budget $b$ over $T$ releases bounds the total leakage by $I(S;R_1,\dots,R_T)\leq B_T=bT$ (\cref{thm:composition}), and \cref{eq:delta<=mi} translates MI bound $B_T$ under prior $\bar\delta_0$ into a bound on the posterior success rate $1-\delta_A$. Hence, every attack guarantee reduces to two numbers: its prior and the total budget. Under our construction of $P_S$ over $m$ subsets, the prior rate is a real number between $1/m$ and $1$.

\begin{proposition}[Prior floor]\label[proposition]{prop:floor}
Under $P_S$ uniform over $|\Sspace|=m$ subsets, for any attack where a correct guess of the secret constitutes success, the optimal prior success rate for an informed adversary $\bar\delta_0=1-\delta_0$ is at least $1/m$.
\end{proposition}

\begin{proof}
Fix some $S_0\in\mathcal S$. By \cref{df:post_adv}, $
\bar\delta_0
=\max_Q\Pr_{S\sim P_S,\hat S\sim Q}[\rho(\hat S,S)=1]
\geq \Pr_{S\sim P_S}[\rho(S_0,S)=1]\geq \Pr_{S\sim P_S}[S=S_0]=1/m.
$
\end{proof}

As noted earlier, we set $m=128$ to keep this prior floor below 1\%. However, for a realistic adversary who does not know the specific subsets in $\mathcal S$, it is intractable to reconstruct them from a high-entropy universe $U$, rendering the prior floor to be $\ll1/m$. While this potentially allows for a much smaller $m$ which benefits efficiency (cf. \cref{sec:4_complexity}), we adhere to the conservative choice of $m=128$.

\paragraph{MIA Guarantees} In our instantiation (\cref{sec:input_discribution}), each record in $U$ lies in exactly $m/2$ subsets,
so the MIA prior is 50\% for every record regardless of $m$. \cref{tab:n_to_delta}(a) presents the upper bounds of MIA success rate (or any attack with prior 50\%) after $T$ PAC-private releases with per-step MI budget $b$.
As $(\epsilon,\delta)$-DP also provably bounds MIA success (cf. \cref{sec:bg_mia}), \cref{tab:dp_t} reports how many releases the system sustains before its MIA guarantee matches that of $(\epsilon,10^{-5})$-DP.
Under a per-step budget of $b=2^{-8}$, MIA success is bounded by 54.42\% after one release (matching the MIA guarantee of DP with $\epsilon\approx0.18$), rising to 91\% ($\epsilon\approx2.31$) after a hundered. With a much tighter budget of $b=2^{-32}$, the privacy loss is almost negligible: even after \emph{one million releases}, MIA success is still bounded by 51.08\%, matching $\epsilon\approx0.04$ under DP. 
In fact, under this budget, the system supports \emph{$\approx477$ million releases} before reaching the same MIA guarantee as $(1,10^{-5})$-DP. This capacity exceeds the lifetime query volume of many deployments, rendering the privacy budget essentially inexhaustible.

\paragraph{Beyond MIA} The same recipe covers arbitrary attacks, which could have \emph{lower} priors and hence tighter bounds at a fixed budget. Group MIA over a $k$-tuple has a prior below $50\%$ that depends on the realized subsets $\mathcal S$; individual-example reconstruction~\cite{balle2022reconstructing} has prior $50\%$ under our sampling rate and can be lowered if records appear in fewer subsets~\cite{hanshen2024ccs}; and full training-set or model stealing~\cite{tramer2016stealing} has prior $1/m$, the floor. \cref{tab:n_to_delta}(b) gives the resulting upper bounds at the prior of $1/128$. Given the lower prior, attack success rate at $b = 2^{-32}$ after a million queries is $<$ 1\%.
We focus on MIA for the rest of the paper and defer other attacks' formulation and prior derivation to \cref{app:attacks}.

\section{Model Distillation from Private Predictions}\label{sec:distillation}

While \cref{alg:pacpac} enables a large number of queries (\cref{tab:n_to_delta}, \ref{tab:dp_t}), the privacy budget is still finite and will eventually be exhausted. When the accumulated leakage exceeds the pre-specified threshold, the system will be forced to halt. To enable unlimited queries, we distill a student model from the PAC-private predictions.
The quality of the distilled model depends on that of the input queries and the private labels; the latter depends on the stability of the predictions.

We can instantiate this given a dataset $D_{\text{pub}}\subseteq\Xspace$.
It does not need to be labeled, and may consist of public data or a pool of user queries collected while serving the private predictions; the privatized labels on $D_{\text{pub}}$ are used for distillation.
The distilled student inherits the rigorous PAC privacy via post-processing, and can thus be released for unlimited inference without incurring further privacy loss.

However, noise introduces label errors that can degrade the student model's utility.
To mitigate this, we propose a confidence filtering step.
Standard practice in model distillation filters samples based on teacher's confidence scores~\cite{lee2013pseudo, xie2020selftraining, sohn2020fixmatch}.
While our mechanism only responds with a noisy predicted label,
we rely on the structure of the underlying non-private prediction as a one-hot vector in $\Reald$ and the public noise distribution (i.e., $\Sigma$). This allows us to statistically test possible predictions against the observed, noisy one. This step does not incur privacy leakage, as PAC privacy guarantees hold even when the adversary knows the privatized mechanism, which includes the noise distribution.

Given $x\in D_\text{pub}$, let $e_y=\hat f(x;S)$ be the non-private prediction, where $e_y\in\Reald$ is the $y$-th standard basis vector. Let $r=e_y+\Normal(0,\Sigma)$ be the observed private prediction and $\tilde y=\argmax_i r_i$.
To ensure label quality, we retain the sample $(x, \tilde{y})$ to train the student model only if the likelihood of $\tilde{y}$ significantly exceeds that of \emph{any other class} $j \neq \tilde{y}$. Formally, we use the following test to ensure that a mislabeled sample is retained with probability at most $\alpha$.

\begin{proposition}\label[proposition]{prop:filtering}
    Given any significance level $\alpha \in (0, 1)$, we retain the sample $(x,\tilde y)$ if and only if for all $j\neq \tilde y$:$$
    T_j(r)=\frac{(e_{\tilde y}-e_j)^\top\Sigma^{-1}(r-e_j)}{\sqrt{(e_{\tilde y}-e_j)^\top\Sigma^{-1}(e_{\tilde y}-e_j)}}\geq \Phi^{-1}(1-\alpha),
    $$
    where $\Phi^{-1}$ is the inverse CDF of $\Normal(0,1)$.
    Then, the probability that $(x,\tilde y)$ is retained while $\tilde y\neq y$ is at most $\alpha$.
\end{proposition}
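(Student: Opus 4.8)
The plan is to reduce the ``retained-but-mislabeled'' event to the tail of a single standardized Gaussian. Write $\zeta := r - e_y$; since the mechanism is de-randomized so that $\hat f(x;S)=e_y$, this $\zeta$ is exactly the injected noise, $\zeta\sim\Normal(0,\Sigma)$. The first step is an elementary containment: if $(x,\tilde y)$ is retained and $\tilde y\neq y$, then the true class $y$ is one of the indices $j\neq\tilde y$, so the retention rule in particular forces $T_y(r)\ge\Phi^{-1}(1-\alpha)$. Hence
\[
\Pr\big[(x,\tilde y)\text{ retained},\ \tilde y\neq y\big]\ \le\ \Pr\big[T_y(r)\ge\Phi^{-1}(1-\alpha),\ \tilde y\neq y\big],
\]
and it suffices to control the right-hand side.

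The second step identifies $T_y(r)$ as a standardized Gaussian statistic. Substituting $r=e_y+\zeta$ into $T_y$, the numerator becomes $(e_{\tilde y}-e_y)^\top\Sigma^{-1}\zeta$, a mean-zero linear functional of $\zeta$, while the denominator $\sqrt{(e_{\tilde y}-e_y)^\top\Sigma^{-1}(e_{\tilde y}-e_y)}$ is exactly the standard deviation of that functional, since $\var\!\big((e_{\tilde y}-e_y)^\top\Sigma^{-1}\zeta\big)=(e_{\tilde y}-e_y)^\top\Sigma^{-1}\Sigma\,\Sigma^{-1}(e_{\tilde y}-e_y)$. Consequently, for every fixed index $k\neq y$, the quantity $W_k:=(e_k-e_y)^\top\Sigma^{-1}\zeta\,/\,\sqrt{(e_k-e_y)^\top\Sigma^{-1}(e_k-e_y)}$ is a standard normal random variable, so $\Pr[W_k\ge\Phi^{-1}(1-\alpha)]=\alpha$ by definition of $\Phi^{-1}$.

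The remaining, and most delicate, point is that $\tilde y=\argmax_i r_i$ is itself a function of $\zeta$, so $T_y(r)$ is not literally one of the pivots $W_k$: the selected direction $e_{\tilde y}-e_y$ is coupled to the noise. I would handle this by conditioning on the realized value of $\tilde y$: on $\{\tilde y=k\}$ one has $T_y(r)=W_k$ identically, hence $\{T_y(r)\ge\Phi^{-1}(1-\alpha)\}\cap\{\tilde y=k\}\subseteq\{W_k\ge\Phi^{-1}(1-\alpha)\}$, an event of probability $\alpha$; the events $\{\tilde y=k\}$ over $k\neq y$ are disjoint and exhaust $\{\tilde y\neq y\}$, and combining the slices yields the stated bound. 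I expect this combination to be the step requiring the most care: one must argue at the level of the observed argmax --- using that a retained mislabel confines $\zeta$ to the single Gaussian half-space $\{(e_{\tilde y}-e_y)^\top\Sigma^{-1}\zeta\ge\Phi^{-1}(1-\alpha)\sqrt{(e_{\tilde y}-e_y)^\top\Sigma^{-1}(e_{\tilde y}-e_y)}\}$ determined by the realized $\tilde y$ --- rather than naively union-bounding over all $d-1$ possible wrong labels. Everything else (the containment of step one and the Gaussian algebra of step two) is routine.
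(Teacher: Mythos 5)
You correctly reduce to $\Pr[T_y(r)\ge c,\ \tilde y\ne y]$ (with $c=\Phi^{-1}(1-\alpha)$) and correctly identify the subtlety that the paper's own proof glosses over: since $\tilde y=\argmax_i r_i$ is a function of the noise, $T_y(r)$ is \emph{not} a fixed linear functional of $r-e_y$, and one cannot simply declare it $\Normal(0,1)$. (The paper's proof does exactly this --- it writes $t$ as a ``linear transformation'' of $r-e_y$ whose coefficient vector contains $e_{\tilde y}$, which is itself $r$-dependent.) Spotting this is a genuine contribution.

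Your proposed repair, however, does not close the gap, and the target bound of $\alpha$ in fact fails for some $\alpha\in(0,1)$. Partitioning on $\{\tilde y=k\}$ and using the slice containments $\{T_y(r)\ge c,\ \tilde y=k\}\subseteq\{W_k\ge c\}$ gives, by disjointness, only $\Pr[T_y(r)\ge c,\ \tilde y\ne y]=\sum_{k\ne y}\Pr[W_k\ge c,\ \tilde y=k]\le(d-1)\alpha$, which is exactly the naive union bound you say must be avoided; the ``single half-space determined by the realized $\tilde y$'' observation does not improve this, because the half-spaces $\{(e_k-e_y)^\top\Sigma^{-1}\zeta\ge c\sqrt{(e_k-e_y)^\top\Sigma^{-1}(e_k-e_y)}\}$ for distinct $k\ne y$ point in genuinely different directions and their union can carry probability well above $\alpha$. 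To see that $\alpha$ itself cannot be the right answer, take $\Sigma=\sigma^2 I$ with $\sigma$ large and any $\alpha\ge1/2$, so $c\le 0$: then $T_j(r)=(r_{\tilde y}-r_j+1)/(\sigma\sqrt2)>0$ for every $j\ne\tilde y$, so the test retains every sample, and the retained-mislabel probability equals $\Pr[\tilde y\ne y]\to(d-1)/d$ as $\sigma\to\infty$, exceeding $\alpha$ once $d\ge3$. Without further quantitative restrictions on $\alpha$, $d$, and $\Sigma$, the provable guarantee is only $(d-1)\alpha$; both your argument and the paper's one-line proof (which silently treats $\tilde y$ as fixed) omit this factor.
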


\begin{proof}
$r\sim\Normal(e_y, \Sigma)$ with unknown $e_y$ and known $\Sigma$. For $j\neq\tilde y$, we test $e_y=e_j$ vs $e_y=e_{\tilde y}$. Test $T_j(r)\geq\Phi^{-1}(\alpha)$ has size $\alpha$.
Full proof is deferred to \cref{app:filter_proof}.
\end{proof}

\cref{prop:filtering} upper-bounds label noise: regardless of the ground-truth non-private prediction, the probability of falsely validating a private label altered by noise is at most $\alpha$. This ensures that the student model is trained exclusively on high-confidence samples where the private prediction's signal is strong enough to statistically rule out all others.

\section{Evaluation}\label{sec:experiments}
\subsection{Datasets and Models}
We evaluate our framework on six standard classification benchmarks spanning tabular, image, and text modalities.
\begin{itemize}[leftmargin=*]
\item \textbf{Census Income}~\cite{adult} is derived from the 1994 US Census. The task is to predict if an individual's annual income exceeds \$50K from 15 demographic attributes.
\item \textbf{Bank Marketing}~\cite{bank} contains marketing data of a Portuguese bank. The goal is to predict if a client will subscribe to a term deposit based on 20 features.
\item \textbf{CIFAR-10} and \textbf{CIFAR-100}~\cite{cifar} are standard benchmarks consisting of $32\times32\times3$ images of 10 and 100 classes.
\item \textbf{IMDb Reviews}~\cite{imdb} is a text dataset of movie reviews, and the task is to classify a review as positive or negative.
\item \textbf{AG News}~\cite{agnews} is a text dataset comprising news articles and the task is to classify articles into four categories.%
\end{itemize}

For the tabular datasets, we construct a random 80/20 train-test split. For the image and text datasets, we retain the original train-test splits.
Dataset statistics are summarized in \cref{tab:dataset_summary}.
The training split constitutes the universe $U$, and the secret training dataset $S$ is sampled from $U$ according to the input distribution $P_S$ described in \cref{sec:input_discribution}.

\begin{table}[t]
    \caption{Summary of datasets used in our experiments.}
    \label{tab:dataset_summary}
    \centering
    \setlength{\tabcolsep}{7pt}
    \begin{tabular}{llcccc}
    \toprule
    Dataset & Modality & Train & Test & Classes\\
    \midrule
    Census Income~\cite{adult} & Tabular & 39,073 & 9,769 & 2\\
    Bank Marketing~\cite{bank} & Tabular & 32,950 & 8,238 & 2 \\
    \midrule
    CIFAR-10~\cite{cifar} & Image & 50,000 & 10,000 & 10\\
    CIFAR-100~\cite{cifar} & Image & 50,000 & 10,000 & 100\\
    \midrule
    IMDb Reviews~\cite{imdb} & Text & 25,000 & 25,000 & 2\\
    AG News~\cite{agnews} & Text & 120,000 & 7,600 & 4\\
    \bottomrule
    \end{tabular}
\end{table}

\begin{table*}[t]
\setlength{\tabcolsep}{4.4pt}
\centering
\caption{Average test accuracy (\%) of PAC-private predictions under various per-step MI budgets $2^{-32}\leq b\leq 2^{-4}$ across datasets. See \cref{tab:n_to_delta} on the provable upper bounds on attack success for each $b$ after varying numbers of releases. For all datasets, even when the per-step MI budget $b$ is as tight as $2^{-32}$, the private predictions maintain strong utility.} %
\begin{tabular}{llcccccccccc}
\toprule
Modality & Dataset & Non-Private & $b=\infty$ & $b=2^{-4}$ & $b=2^{-8}$ & $b=2^{-12}$ & $b=2^{-16}$ & $b=2^{-20}$ & $b=2^{-24}$ & $b=2^{-28}$ & $b=2^{-32}$ \\
\midrule
\multirow{2.5}{*}{Tabular} & Census Income & 87.39 & 87.17 & 87.15 & 86.68 & 85.92 & 85.86 & 85.84 & 85.84 & 85.84 & 85.84 \\
\cmidrule{2-12}
 & Bank Marketing & 91.98 & 91.69 & 91.67 & 91.02 & 90.36 & 90.28 & 90.28 & 90.27 & 90.27 & 90.29 \\
\midrule
\multirow{2.5}{*}{Image} & CIFAR-10 & 97.37 & 95.80 & 95.71 & 93.52 & 88.35 & 87.90 & 87.81 & 87.80 & 87.80 & 87.79 \\
\cmidrule{2-12}
 & CIFAR-100 & 84.02 & 77.79 & 77.69 & 75.56 & 58.38 & 56.34 & 56.17 & 56.13 & 56.10 & 56.11 \\
\midrule
\multirow{2.5}{*}{Text} & IMDb Reviews & 87.10 & 85.13 & 85.13 & 84.46 & 74.26 & 69.32 & 69.16 & 69.09 & 69.10 & 69.10 \\
\cmidrule{2-12}
 & AG News & 91.61 & 90.44 & 90.35 & 87.95 & 80.18 & 79.42 & 79.31 & 79.27 & 79.25 & 79.25 \\
\bottomrule
\end{tabular}

\label{tab:b_to_acc}
\end{table*}

\begin{figure*}[t]
\centering
\includegraphics[trim={0 1.2em 0 1.8em},clip]{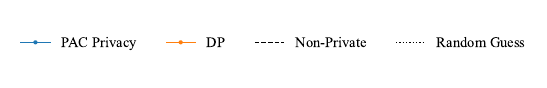}

\includegraphics{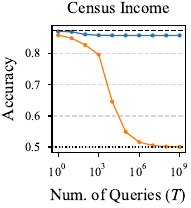}%
\hfill
\includegraphics{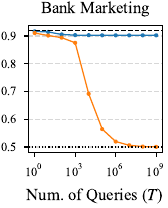}%
\hfill
\includegraphics{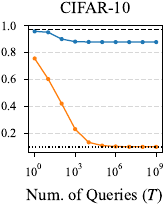}%
\hfill
\includegraphics{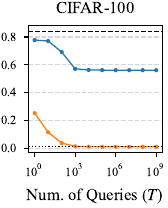}%
\hfill
\includegraphics{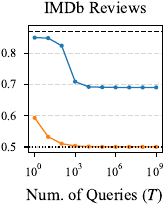}%
\hfill
\includegraphics{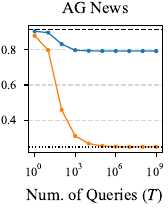}
\caption{
Accuracy vs the number of supported queries $T$, with both DP and PAC privacy configured to provide the MIA guarantee of $(1,10^{-5})$-DP after $T$ queries. DP private prediction degrades to random guessing as $T$ grows, while PAC privacy maintains strong utility.}
\label{fig:pac_vs_dp}
\end{figure*}

Since our privacy framework is model-agnostic, we can employ any pipeline to train the model on the secret dataset $S$ to instantiate $\hat f(\,\cdot\,;S)$, as discussed in \cref{sec:mechnism_instantiation}.
For tabular datasets, we utilize XGBoost~\cite{chen2016xgboost}. To optimize performance, we perform a hyperparameter sweep and choose the best combination via cross-validation locally on $S$. 
For image classification, we use the Wide-ResNet-28-10 architecture~\cite{zagoruyko2016wide}, a standard choice in recent DP literature~\cite{ de2022unlocking,tang2025differentially}. This model has 36.5 million parameters and is designed for CIFAR-style $32\times32\times3$ images. We train the model using standard data pre-processing and hyperparameters. Additionally, as $S$ is a subsampled subset of $U$ with expectedly half the size, we apply CutMix~\cite{yun2019cutmix} and MixUp~\cite{zhang2018mixup} for augmentation.
For text classification, we train a BERT-Small model~\cite{devlin2019bert, turc2019well} from scratch, which is a compact variant of BERT with 28.8 million parameters. We reserve 10\% of $S$ as a validation set for early stopping. 
Our code is available at \repourl, and implementation details are provided in \cref{app:implementation_details}.

\subsection{PAC-Private Predictions}\label{sec:exp:ppp}

We experiment with the private prediction mechanism described in \cref{sec:private_responses} on each of the six datasets. As a one-time pre-processing step, we sample $m=128$ subsets to fix the prior $P_S$ and train a model on each subset.
We then run 1,000 independent trials.
In each trial, we submit a random permutation of test examples to query our private prediction mechanism and report the average accuracy.

We test our mechanism under a wide range of per-step MI budgets $b$, ranging from $2^{-4}$ down to $2^{-32}$. In this interactive setting, we deliberately do \emph{not} fix a total MI budget $B$ to be allocated over all test examples,
because doing so would implicitly tie the overall privacy guarantee to the size of the test data.
Instead, our goal is to support a large number of queries, potentially far exceeding the size of the test dataset, and the overall privacy guarantee depends on the total number of queries $T$ the system aims to support.
Therefore, we treat $b$ as the primary security parameter: the cumulative privacy guarantee follows as a function of $b$ and the total number of queries $T$. A data curator can consult \cref{tab:n_to_delta} to select an appropriate value of $b$ based on their expected query volume and desired privacy target.
For example, to bound MIA success by 51.08\% after one million queries, one can set $b=2^{-32}$ (\cref{tab:n_to_delta}(a)).
In \cref{sec:mia_experiments}, we further derive an optimal MIA decision rule and report its success rates.
Additionally, we include two non-private baselines.
First, we run our private prediction mechanism with $b$ set to infinity. In this case, the predictions are still made by the model trained on the secret dataset $S$ sampled from $P_S$, but no noise is added.
Finally, we also report the non-private baseline for a model directly trained on the full universe $U$. The results are presented in \cref{tab:b_to_acc}.

\begin{table*}[t]
\centering
\caption{Upper bounds of MIA success and the average MIA accuracy of \cref{prop:optimal_mia} after $T$ PAC-private predictions on CIFAR-10 at varying per-step MI budget $b$. Empirical MIA accuracy is strictly bounded by the theoretical guarantee.}
\label{tab:mia}
\setlength{\tabcolsep}{6.3pt}
\begin{tabular}{llcccccccccc}
\toprule
\multicolumn{2}{l}{Number of Queries $T$} & 100,000 & 200,000 & 300,000 & 400,000 & 500,000 & 600,000 & 700,000 & 800,000 & 900,000 & 1,000,000 \\
\midrule
\multirow{2}{*}{$b=2^{-12}$} & MIA Bound (\%) & 100.00 & 100.00 & 100.00 & 100.00 & 100.00 & 100.00 & 100.00 & 100.00 & 100.00 & 100.00 \\
 & MIA Acc (\%) & 74.17 & 90.00 & 97.14 & 99.55 & 99.95 & 100.00 & 100.00 & 100.00 & 100.00 & 100.00 \\
\midrule
\multirow{2}{*}{$b=2^{-16}$} & MIA Bound (\%) & 100.00 & 100.00 & 100.00 & 100.00 & 100.00 & 100.00 & 100.00 & 100.00 & 100.00 & 100.00 \\
 & MIA Acc (\%) & 53.02 & 54.34 & 55.68 & 57.18 & 58.23 & 59.74 & 61.31 & 62.44 & 64.18 & 64.86 \\
\midrule
\multirow{2}{*}{$b=2^{-20}$} & MIA Bound (\%) & 71.48 & 79.85 & 85.89 & 90.60 & 94.38 & 97.39 & 99.61 & 100.00 & 100.00 & 100.00 \\
 & MIA Acc (\%) & 50.75 & 50.99 & 51.19 & 51.47 & 51.76 & 52.00 & 52.13 & 52.28 & 52.39 & 52.49 \\
\midrule
\multirow{2}{*}{$b=2^{-24}$} & MIA Bound (\%) & 55.45 & 57.71 & 59.43 & 60.87 & 62.15 & 63.29 & 64.34 & 65.32 & 66.23 & 67.09 \\
 & MIA Acc (\%) & 50.09 & 50.17 & 50.35 & 50.25 & 50.24 & 50.36 & 50.40 & 50.41 & 50.47 & 50.47 \\
\midrule
\multirow{2}{*}{$b=2^{-28}$} & MIA Bound (\%) & 51.36 & 51.93 & 52.36 & 52.73 & 53.05 & 53.34 & 53.61 & 53.86 & 54.09 & 54.31 \\
 & MIA Acc (\%) & 49.96 & 50.03 & 50.07 & 50.08 & 50.01 & 49.96 & 49.99 & 50.02 & 50.01 & 50.00 \\
\midrule
\multirow{2}{*}{$b=2^{-32}$} & MIA Bound (\%) & 50.34 & 50.48 & 50.59 & 50.68 & 50.76 & 50.84 & 50.90 & 50.97 & 51.02 & 51.08 \\
 & MIA Acc (\%) & 50.05 & 50.00 & 49.88 & 49.85 & 49.86 & 49.88 & 49.90 & 49.94 & 49.90 & 49.85 \\
\bottomrule
\end{tabular}
\end{table*}

\paragraph{Privacy-Utility Trade-off} 
Utility loss in our framework originates from two sources: subsampling and noise injection. 
That is, to enable PAC privatization, our mechanism trains the underlying model on a secret dataset $S$ subsampled from $U$.
This results in a \emph{subsampling} error, captured by the performance gap between the non-private and $b=\infty$ baselines in \cref{tab:b_to_acc}.
For all datasets, this gap is minimal ($<2\%$), with the only exception of CIFAR-100, which is more sensitive to subsampling due to the small number of samples per class (i.e., merely 500). Second, noise injection further degrades the utility as the budget $b$ tightens.

We observe a surprising phenomenon: rather than collapsing to triviality, the utility converges even as $b$ tightens to extremely small values.
As shown in \cref{tab:b_to_acc}, accuracy remains nearly constant as $b$ decreases from $2^{-16}$ to $2^{-32}$, and our mechanism maintains strong utility even at $b=2^{-32}$ for all datasets.
This resilience is a direct consequence of the instance-based nature of PAC privacy.
When the predictions are stable --- that is, when most of the underlying models trained on different $S\in\Sspace$ agree on the prediction for a given input --- the MI is inherently low.
Consequently, the mechanism can satisfy extremely strict per-step privacy bounds with minimal noise, preserving strong utility even when $b$ is vanishingly small.
Notably, this is only true with hard predictions, i.e., one-hot vectors. Confidence scores are often much less stable, as discussed in \cref{app:results}.%

An effective local proxy for such stability is the subsampling error.
Small subsampling error implies \emph{generalizability} --- that is, the subsampled dataset is representative of the population, and the model trained on a subsampled dataset generalizes well beyond its training data.
This further suggests that models trained on \emph{different} subsets are also likely to converge to similar decision boundaries, thereby exhibiting high stability and requiring less noise to hide the secret set.
This correlation is evident in our results.
For the two tabular datasets, where the tasks are simpler and data is abundant, the subsampling gap is negligible ($<0.3\%$). Consequently, the predictions are highly stable across the $m$ models, and we observe almost no utility drop at $b=2^{-32}$ as compared to the non-private baseline. 
In contrast, CIFAR-100 exhibits a larger subsampling penalty from 84.02\% to 77.79\%.
This indicates larger variance and thus, noise for privatization, leading to larger utility drop at $b=2^{-32}$ (down to 56.11\%). Nevertheless, this performance remains competitive.
Crucially, this suggests that a model that generalizes well and exhibits low variance will naturally incur a lower utility cost under PAC privacy, alluding to ``win-win'' scenarios between robust learning and privacy.

\paragraph{Comparison to DP Prediction}
While the sensitivity of a non-convex model's predictions is intractable to bound, the canonical DP approach is \emph{sample-and-aggregate}~\cite{nissim2007smooth}, exemplified by PATE~\cite{pate,papernote2017pate}: the training data is split into $K$ \emph{disjoint} subsets, where a model is trained on each, and queries are answered by a noisy aggregate of their votes --- sensitivity is thus bounded by construction.
We implement this approach using GNMax~\cite{papernote2017pate} and use Gaussian DP~\cite{dong2022gaussian} for composition, which is exact for Gaussian mechanisms and tighter than advanced $(\epsilon,\delta)$-DP composition~\cite{dwork2010boosting}, RDP~\cite{mironov2017renyi}, or zCDP~\cite{bun2016concentrated}.
We additionally grant the DP baseline the best number of subsets $K$ at no privacy cost.
We configure both DP and ours to satisfy the MIA guarantee of $(1,10^{-5})$-DP after $T$ queries --- that is, MIA success bounded by 73.11\%, and a total MI budget of $\approx0.11$ for PAC privacy.
We vary $T$ and report the results in \cref{fig:pac_vs_dp}.

Across all datasets, DP prediction degrades to random guessing as $T$ grows, remaining tolerable only at lower query volumes on the simpler tasks.
This is due to the \emph{input-independent} aggregation sensitivity:
under a fixed budget, when $T$ is large, the required per-query $\epsilon$ vanishes and DP must converge to random guessing to satisfy the worst-case, input-independent guarantee.
As a result, DP private prediction is typically used to label a modest set of public examples for distillation~\cite{papernote2017pate,pate}, not as a standalone high-volume online prediction service.
Unlike DP, PAC privacy exploits the stability of the underlying predictor, adding negligible noise when the prediction is \emph{stable} across possible training datasets.
Thus, accuracy plateaus at a strong, non-trivial level even as $T$ grows.

\paragraph{Comparison to DP Training}
As DP prediction does not scale, the standard DP approach is to privatize model \emph{weights} via DP-SGD~\cite{song2013dpsgd,abadi2016deep} during training~\cite{van2020trade}.
The state of the art for private image classification achieves 72.32\% on CIFAR-10 at $\epsilon=1$ and 43.33\% on CIFAR-100 at $\epsilon=3$~\cite{tang2025differentially};
ours attains 87.79\% and 56.11\% while supporting ${\approx}$477M queries before reaching the MIA guarantee matching $\epsilon=1$ and over 2 billion before $\epsilon=3$. 
Our improvements are driven by the fact that we privatize \emph{predictions} behind an API rather than publishing weights, and PAC privacy is able to exploit the stability of those predictions.
While we do not replace DP-SGD since we do not release the model or serve infinite queries,
we offer a markedly more favorable privacy-utility trade-off for ML inference services where the threat model aligns.
We revisit the comparison to DP training in \cref{sec:distillation_experiments}.

\begin{table*}[t]
\centering
\caption{CIFAR-10 test accuracy of a model distilled from $T$ PAC-private predictions to CINIC-ImageNet examples. Trained on 210,000 private predictions, the distilled model achieves 91.86\% accuracy, while provably bounding MIA success to 50.49\%, matching $(0.0198, 10^{-5})$-DP.
SotA DP training on CIFAR-10 with public ImageNet access achieves 94.7\% accuracy at $\epsilon=1$~\cite{de2022unlocking}.
}
\label{tab:distillation}
{
\setlength{\tabcolsep}{2.5pt}
\begin{tabular}{lccccccc}
\toprule
Number of Private Predictions ($T$) & 5,000 & 10,000 & 20,000 & 40,000 & 80,000 & 160,000 & 210,000 \\
\midrule
Total MI Budget ($B_T$) &  $1.16 \times 10^{-6}$ &  $2.33 \times 10^{-6}$ &  $4.66 \times 10^{-6}$ &  $9.31 \times 10^{-6}$ &  $1.86 \times 10^{-5}$ &  $3.73 \times 10^{-5}$ &  $4.89 \times 10^{-5}$ \\
MIA Success Rate Bound (\%) & 50.08 & 50.11 & 50.15 & 50.22 & 50.31 & 50.43 & 50.49 \\
DP $\epsilon$ w/ Matching MIA Guarantee ($\delta=10^{-5}$) & 0.0030 & 0.0043 & 0.0061 & 0.0086 & 0.0122 & 0.0172 & 0.0198 \\
\midrule
Avg. Number of Confident Predictions & 1,989 & 3,976 & 7,951 & 15,983 & 32,070 & 64,136 & 84,295 \\
Avg. Distilled Model Test Accuracy (\%) & 64.01 & 69.79 & 81.34 & 84.94 & 87.74 & 91.19 & 91.86 \\
\bottomrule
\end{tabular}

}
\end{table*}

\subsection{Defense against MIA}\label{sec:mia_experiments}

To empirically evaluate the privacy of our algorithm, we run membership inference attacks under our threat model. Standard MIA literature typically assumes an adversary with limited knowledge and thus often rely on heuristics, such as training shadow models to \emph{estimate} the behavior of the target model and setting threshold-based criteria for membership \cite{shokri2017membership, carlini2022membership}.
However, under our threat model, where the strong adversary possesses full knowledge of the distribution $P_S$ of the secret and functions $M_t$, heuristic approximation is not necessary.
The adversary can derive the \emph{optimal} strategy to maximize their success rate for a given interaction history.

\begin{proposition}[Optimal MIA Decision Rule]\label[proposition]{prop:optimal_mia}
    Let random variable $\Tau_T$ be the history of interactions up to time $T$. For a target individual $u^* \in U$, the decision rule $A: \mathrm{supp}(\Tau_T) \to \{0,1\}$ that maximizes the adversary's membership inference accuracy is:
    $$\label{eq:mia_rule}
    A^*(\tau_T) = \begin{cases} 
      1 & \text{if } \sum\limits_{S \in \Sspace} \mathbf{1}_{u^* \in S} \cdot P(S \mid \Tau_T=\tau_T) > \frac12 \\
      0 & \text{otherwise}
    \end{cases}.$$
\end{proposition}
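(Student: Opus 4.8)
The statement is the classical fact that the Bayes-optimal classifier for a binary label under 0--1 loss is the MAP rule; here the ``label'' is the membership indicator $\mathds{1}\{u^* \in S\}$ and the observation is the transcript $\Tau_T$. The plan is to reduce the problem to this textbook fact by computing the posterior probability of membership given a realized transcript. First I would fix an arbitrary target $u^* \in U$ and an arbitrary realizable transcript $\tau_T$, and define $p(\tau_T) := \Pr[u^* \in S \mid \Tau_T = \tau_T] = \sum_{S \in \Sspace} \mathds{1}\{u^* \in S\}\, P(S \mid \Tau_T = \tau_T)$. For a deterministic decision rule $A$, condition the success probability on the transcript: by the tower rule,
$$
\Pr[A(\Tau_T) = \mathds{1}\{u^* \in S\}] = \E_{\tau_T}\big[\Pr[A(\tau_T) = \mathds{1}\{u^* \in S\} \mid \Tau_T = \tau_T]\big].
$$
Since $A(\tau_T)$ is now a fixed value in $\{0,1\}$, the inner probability equals $p(\tau_T)$ if $A(\tau_T) = 1$ and $1 - p(\tau_T)$ if $A(\tau_T) = 0$, i.e.\ it is $\max\{p(\tau_T), 1-p(\tau_T)\}$ exactly when $A(\tau_T)$ is chosen to match the larger of the two, which is precisely the rule $A^*$ in \cref{eq:mia_rule} (predict $1$ iff $p(\tau_T) > 1/2$). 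Hence $A^*$ maximizes the conditional success probability pointwise for every $\tau_T$, and therefore maximizes the expectation, which is the adversary's overall accuracy from \cref{def:mia}.

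A couple of auxiliary points need to be handled for completeness. One should note that $P(S \mid \Tau_T = \tau_T)$ appearing in the statement is exactly the belief state $P_T$ tracked by \cref{alg:pacpac}, by \cref{lm:post}, so the rule is in fact computable by the informed adversary who knows $P_S$, all the $M_t$, and the noise calibration procedure --- this is what makes $A^*$ a legitimate strategy in the sense of \cref{def:pac,def:mia}. Second, one should remark that restricting to deterministic rules $A$ loses no generality: a randomized rule is a mixture of deterministic ones, so its success probability is an average of their success probabilities and cannot exceed the pointwise maximum achieved by $A^*$. The tie-breaking choice when $p(\tau_T) = 1/2$ is irrelevant to the optimal value, so fixing the convention ``output $0$ on ties'' as in \cref{eq:mia_rule} is without loss.

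I do not anticipate a genuine obstacle here; the result is a direct instantiation of Bayes-optimality, and the only mild subtlety is bookkeeping --- making sure the conditioning is on the \emph{full} transcript $\Tau_T$ (mechanisms and releases), invoking \cref{lm:post} to identify the posterior with the maintained belief state, and confirming that the adversary model in \cref{def:mia} indeed grants enough knowledge to evaluate \cref{eq:mia_rule}. The full argument, including the reduction from randomized to deterministic rules, is deferred to the appendix.
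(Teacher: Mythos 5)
Your proposal is correct and follows the same route as the paper: condition the success probability on the realized transcript $\tau_T$ via the tower rule, observe that the optimal deterministic choice at each $\tau_T$ is the class with larger posterior probability, and note that the balanced prior $\Pr(u^* \in S) = 1/2$ makes this exactly the threshold rule $A^*$. The paper phrases the pointwise optimization as an $\argmax$ over $a \in \{0,1\}$ rather than writing $\max\{p, 1-p\}$ explicitly, and it does not spell out the remarks you add about randomized rules and computability via \cref{lm:post}, but these are cosmetic differences on top of the identical Bayes-optimality argument.
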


\begin{proof}This follows the derivation of a Bayes optimal statistical procedure. Full proof is deferred to \cref{app:optimal_mia_proof}.\end{proof}

\begin{remark}\label[remark]{rm:optimal_infer_vs_optimal_query}
\cref{prop:optimal_mia} defines the optimal \emph{membership inference} strategy after observing $\Tau_T=\tau_T$. It does not, however, describe the optimal strategy for \emph{choosing} the queries $q_t$. While the adversary can adaptively select $q_t$ to maximize the expected information gain, solving for the optimal query sequence is computationally intractable.
\end{remark}

In our experiments, the adversary seeks to simultaneously infer the membership of all examples in the universe $U$, i.e., to decide whether $u\in U$ belongs to the realized secret set $S$ used for predictions.
Following standard MIA protocols~\cite{shokri2017membership, carlini2022membership}, the adversary queries the mechanism using the exact examples from $U$ whose membership they wish to discover.
Crucially, because the adversary has full system knowledge and $m=|\Sspace|$ is tractable, they can evaluate $A^*(h_T)$ exactly. This step mirrors the belief state update performed by the curator in \cref{alg:pacpac}.
We execute 1,000 independent trials. In each trial, we submit a sequence of up to one million queries, randomly sampled from $U$ without replacement. At various time steps $T$, we run the optimal decision rule $A^*(h_T)$ to infer the membership of all examples in $U$ and report the average accuracy.

MIA results on CIFAR-10 are summarized in \cref{tab:mia}.
Under $2^{-24}\leq b\leq 2^{-12}$, we observe a clear improvement in the empirical MIA accuracy with more queries. When the per-step MI budget is vanishingly small, i.e., $b=2^{-28}$ or $2^{-32}$, the MIA adversary struggles to achieve non-trivial accuracy even after one million queries.
Notably, across all settings, the empirical MIA accuracy is \emph{strictly bounded} by the theoretical guarantees derived by PAC privacy, often with a significant gap.
For instance, 
at $b=2^{-12}$, any query volume above 10,000 results in a vacuous upper bound of 100\%, yet the adversary only achieves perfect membership inference after 600,000 queries.
This parallels findings in DP, where larger $\epsilon$ often effectively defends against practical MIAs while providing loose upper bounds~\cite{jayaraman2019evaluating,lowy2024doesdifferentialprivacylarge}.

This disparity between the MIA upper bound and its empirical accuracy can be attributed to the adversary's suboptimal querying strategy (cf. \cref{rm:optimal_infer_vs_optimal_query}).
While querying examples from $U$ is a standard practice in MIA literature, an adversary could use more complex strategies, such as adversarial examples~\cite{yoqo24,oslo24}.
We note that our theoretical guarantee provides provable privacy bounds under \emph{any possible querying strategy}.

\subsection{Private Model Distillation}\label{sec:distillation_experiments}

To evaluate the distillation protocol described in \cref{sec:distillation}, we simulate a scenario where the curator uses the private prediction mechanism to label a large auxiliary dataset. As a proof of concept, we use the ImageNet partition of the CINIC-10 dataset~\cite{darlow2018cinic10imagenetcifar10} as our auxiliary dataset; while it shares the same 10 classes as CIFAR-10, it represents a distinct distribution.
This mimics realistic deployment settings: the auxiliary data may be collected from user queries, or curated from public sources.
We restrict this experiment to CIFAR-10; to the best of our knowledge, there are no similar auxiliary datasets for the other datasets.
We use all 210,000 examples in CINIC-ImageNet to query the private prediction mechanism trained on CIFAR-10, under a strict per-step budget of $b=2^{-32}$.
To ensure label quality, we apply confidence filtering (cf. \cref{prop:filtering}) with $\alpha=0.01$. The accepted samples, with their privacy-preserving labels, are then used to train a Wide-ResNet-28-10 model from scratch. We run 5 times and report the average.

The private prediction mechanism trained on CIFAR-10 achieves 58.66\% accuracy on the ImageNet examples, much lower than its 87.79\% accuracy on CIFAR-10, as a result of distribution shift. However, our confidence filtering proves highly effective in practice: it retains 40.14\% of the privately labeled auxiliary examples, and the accuracy on this subset rises dramatically to over 95\%. This confirms that the hypothesis test is able to isolate high-quality, in-distribution examples from the noisy, out-of-distribution data.

The accuracy of the distilled model on CIFAR-10 test examples is summarized in \cref{tab:distillation}.
First, we notice that the distillation utility is strong even with limited auxiliary data. With only 5,000 queries, the student trained from scratch on merely 1,989 confident examples already achieves 64.01\% accuracy. Using all 210,000 queries, the distilled model achieves 91.86\% accuracy. 
Notably, this even outperforms the private teacher (87.79\%), as the student effectively learns from the diverse features in the ImageNet examples with high-quality private labels.
This highlights that proper distillation can unlock utility beyond private prediction's baseline.

From a privacy perspective, releasing this student model exposes only the information leaked through the 210,000 private predictions about membership of the CIFAR-10 examples. The accumulated MI bound corresponds to an upper bound of 50.49\% on MIA success, matching the MIA guarantee offered by $\epsilon \approx 0.02$ under DP.
We compare this to the standard setting of differentially private training with access to public data. The state-of-the-art result under similar settings achieves 94.7\% accuracy on CIFAR-10 at $\epsilon=1$~\cite{de2022unlocking} when fine-tuning a Wide-ResNet-28-10 model pre-trained on ImageNet~\cite{imagenet}. Our method achieves slightly worse utility (91.86\%) while offering a MIA-matching DP $\epsilon$ that is \emph{over an order of magnitude tighter}. 
Further, our distillation protocol does not require the public dataset to be labeled,
which is particularly favorable in scenarios where large-scale, unlabeled public data is available.

\section{Related Work}\label{sec:related}

\vspace{-1ex}
\paragraph{PAC Privacy} \citet{hanshen2023crypto} establishes the theoretical foundations of PAC privacy, showing that it allows for the privatization of arbitrary black-box functions via simulations. Subsequent works apply PAC privacy for learnable encryption~\cite{hanshen2024ccs} and classical ML algorithms~\cite{mayuri2025sp}. The latter explicitly links utility under PAC privacy to \emph{algorithmic stability}. More recently, \cite{xiao2025onesidednoise} studies one-sided noise for PAC privacy, while \cite{zhang2026residual} uses $f$-divergence to achieve tighter noise calibration.
However, these applications operate under a single release. Composition under PAC privacy is first explored by \citet{hanshen2023crypto} with a simple linear composition theorem under the \emph{non-adaptive} setting. 
\citet{hanshen2025thesis} proves a linear adversarial composition theorem where the secret is \emph{resampled} for every release, and also achieves adversarial composition in the persistent-secret setting. However, the latter effectively enforces a DP-like input-independent bound to handle adaptivity, negating the core benefits of PAC privacy.
Our work closes this gap with tight, instance-based composition under the persistent-secret, adaptive, and adversarial setting. 

\paragraph{DP vs PAC Privacy} DP is the state-of-the-art framework for provable privacy guarantees. DP bounds the influence of a single record in the dataset, thus limiting what can be inferred about any individual from the released output. This is usually achieved by injecting noise to the output, calibrated to the algorithm's \emph{sensitivity}~\cite{dwork06dp} --- the maximum change in the output caused by a single data point.

DP and PAC privacy provide semantically different privacy guarantees. While DP provides \emph{stronger, input-independent} guarantees, PAC privacy provides \emph{instance-based} ones under a specified input distribution~\cite{hanshen2023crypto}. However, we believe PAC privacy has some advantages in practice. First, it enables \emph{automatic privatization}, whereas bounding sensitivity for DP requires substantial human efforts.
Second, PAC privacy often enjoys better privacy-utility trade-offs by calibrating noise to the algorithm's \emph{stability}.
Last, PAC privacy admits a success-rate bound for an \emph{arbitrary} attack. For example, as discussed earlier, at a fixed MI, PAC privacy bounds group membership inference more tightly than individual due to a lower prior, whereas DP requires group privacy, which increases $\epsilon$.
We now review two lines of work in differentially private machine learning that are most relevant to ours.

\paragraph{DP Training}
Conventionally, privacy-preserving ML focuses on releasing model weights with DP guarantees, most commonly via DP-SGD, which clips per-sample gradients to bound sensitivity and injects noise during optimization~\cite{song2013dpsgd,abadi2016deep}. 
However, this approach often incurs substantial utility loss, particularly on larger models. 
A recent survey~\cite{demelius25} attributes the difficulty of DP in deep learning to the model's high-dimensionality, while recent analysis~\cite{ertan2026fundamental} identifies a worst-case noise floor on DP-SGD that precludes strong privacy and high utility together.
As a result, competitive utility under DP is often achieved either with relaxed privacy budgets, or relying on public data.
This is evident in private image classification.
\citet{tramer2021dp} shows that lower dimensional classifiers on fixed features often outperform end-to-end DP training. \citet{de2022unlocking} demonstrates that scaling and careful tuning improve DP-SGD performance, yet strong utility at tight budgets ($\epsilon\approx1$) is achieved primarily via fine-tuning pretrained models. 
More recently, \citet{tang2025differentially} pushes the training-from-scratch utility of DP-SGD by learning priors generated from random processes. This achieves the 
SoTA accuracy of 72.32\% on CIFAR-10 at $\epsilon=1$ and 43.33\% on CIFAR-100 at $\epsilon=3$, still far from non-private baselines. Finally, a recent benchmark~\cite{mokhtari2025dpbenchmark} explicitly separates settings with and without public data and highlights that large public pretraining is crucial for competitive utility. %

\paragraph{DP Prediction}
A parallel line of work explores \emph{private prediction}, where DP is enforced on the sequence of outputs rather than the model itself.
\citet{dwork2018prediction} formalizes the problem and connects algorithmic stability with privacy.
However, it relies on strong assumptions to bound sensitivity, only provable for simple or convex learners.
For non-convex predictors, sensitivity cannot be tightly bounded, and the canonical approach is \emph{sample-and-aggregate}~\cite{nissim2007smooth}.
PATE~\cite{pate,papernote2017pate} adopts this approach for private prediction, aggregating the predictions of $K$ teacher models each trained on a disjoint subset of the data.
However, it is most effective for semi-supervised learning rather than as a standalone prediction service to answer a large number of queries, as we show in \cref{sec:exp:ppp}.
We further note that PATE's privacy analysis is data-dependent, with per-query loss determined by teacher vote margins, making the total number of answerable queries hard to bound a priori. 
More broadly, private prediction has seen limited adoption, as private training has been found to outperform it across many settings~\cite{van2020trade}. 
This calculus shifts for large language models (LLM), however, where privatizing weights is impractical and the prediction is itself the unit of deployment; private prediction has accordingly gained traction for LLM inference~\cite{submix,pmixed}.

\section{Conclusion}\label{sec:conclusion}

This work presented a novel framework to release PAC-private predictions under adaptive and adversarial queries. We derived a novel adversarial composition theorem for PAC privacy that achieves tight privacy accounting via \emph{adaptive noise calibration} that adjusts to an optimal adversary's shifting beliefs. Instantiated for ML predictions, our method exploits the inherent stability of model predictions and maintains high utility under tight privacy budgets.

Potential future directions include theoretical extensions of our composition theory, such as privacy accounting under $f$-divergence and efficient estimation for intractable secret spaces.
Another promising avenue is applying PAC-private prediction to LLM inference to generate longer sequences of tokens with strong utility. Finally, \cref{alg:pacpac} is a general-purpose tool applicable to any iterative algorithm with sequential, adaptive releases, such as SGD.

\bibliographystyle{IEEEtran}
\bibliography{ref}

\appendices
\section{Analysis of Posterior-Oblivious Composition}\label[appendix]{app:static_composition}

We hereby provide a more detailed analysis of the posterior-oblivious adversarial composition~\cite{hanshen2025thesis} briefly discussed in \cref{sec:pacpac}. We first state the formal theorem.

\begin{theorem}[Posterior-Oblivious Adversarial Composition~\cite{hanshen2025thesis}]\label{thm:thm4_2}
    Consider the same setting described in \cref{sec:composition_setting}. If at each step $t=1,2,\ldots, T$, the noise covariance matrix $\Sigma_t$ is calibrated w.r.t. $P_S$ and $M_t$ to enforce:\begin{gather}
        \sup_{Q} I_{S\sim Q}(S;M_t(S)+\Normal(0,\Sigma_t))\leq b'_t,\label{eq:universal_bound}\\
        I_{S\sim P_S}(S;M_t(S)+\Normal(0,\Sigma_t))\leq b_t,\label{eq:static_bound}
    \end{gather}
    Then, the overall MI after $T$ steps is bounded as follows:
    \begin{equation}
        I(S;R_1,\ldots,R_T)\leq B_T,
    \end{equation}
    where $B_1=b_1$ and for $t\geq 2$, $$
    B_t=B_{t-1}+\min\{b'_t\sqrt{2B_{t-1}}+b_t, b'_t\}.
    $$
\end{theorem}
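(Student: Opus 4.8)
The plan is to bound the total leakage $I(S;R_1,\ldots,R_T)$ by induction on $T$, using the chain rule $I(S;R_1,\ldots,R_T) = I(S;R_1,\ldots,R_{T-1}) + I(S;R_T \mid R_1,\ldots,R_{T-1})$. The first term is bounded by $B_{T-1}$ by the inductive hypothesis, so the work is to show the conditional term $I(S;R_T\mid R_1,\ldots,R_{T-1})$ is at most $\min\{b'_T\sqrt{2B_{T-1}}+b_T,\ b'_T\}$. I would write $I(S;R_T\mid \Tau_{T-1}) = \E_{\tau_{T-1}}\big[I_{S\sim P_{S\mid\Tau_{T-1}=\tau_{T-1}}}(S;M_T(S)+\Normal(0,\Sigma_T))\big]$, using the adversary assumption \eqref{eq:adv_ass} that $M_T \perp S \mid \Tau_{T-1}$ so that conditioned on a realized transcript the release is just the fixed mechanism $M_T$ applied to $S$ drawn from the posterior, with noise $\Sigma_T$ calibrated w.r.t.\ $P_S$ (not the posterior — this is the static algorithm).

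The $b'_T$ branch is immediate: \eqref{eq:universal_bound} enforces $I_{S\sim Q}(S;M_T(S)+\Normal(0,\Sigma_T))\leq b'_T$ for \emph{every} input distribution $Q$, in particular for $Q = P_{S\mid\Tau_{T-1}=\tau_{T-1}}$, so the conditional term is $\leq b'_T$ regardless of the posterior. The harder branch is $b'_T\sqrt{2B_{T-1}}+b_T$. Here the idea is a change-of-measure argument: the noise was calibrated against $P_S$, which controls leakage well when the posterior is close to $P_S$, and the gap should scale with how far the posterior has drifted. I would compare $I_{S\sim P_{S\mid\tau_{T-1}}}(S;R_T)$ to $I_{S\sim P_S}(S;R_T) \leq b_T$ and bound the difference by something proportional to a divergence between $P_{S\mid\tau_{T-1}}$ and $P_S$; then invoke Pinsker or the fact that $\E_{\tau_{T-1}}\big[\text{(divergence term)}\big]$ relates to $I(S;\Tau_{T-1}) \leq B_{T-1}$ via \cref{lm:post} and the chain-rule identity $\E_{\tau_{T-1}}[\KL(P_{S\mid\Tau_{T-1}=\tau_{T-1}}\|P_S)] = I(S;\Tau_{T-1})$. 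A Cauchy–Schwarz / Jensen step over $\tau_{T-1}$ then yields the $\sqrt{2B_{T-1}}$ factor (the $2$ coming from Pinsker-type bounds $\|P-Q\|_{TV}\leq\sqrt{\tfrac12\KL}$, so $\sqrt{2B_{T-1}}$ after squaring through the expectation), with $b'_T$ appearing as the Lipschitz-type constant multiplying the drift because the universal bound \eqref{eq:universal_bound} controls sensitivity of the per-step leakage to the input distribution.

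The main obstacle is making the change-of-measure step rigorous: quantifying how much $I_{S\sim Q}(S;M_T(S)+Z)$ can exceed the $P_S$-calibrated bound $b_T$ as $Q$ moves away from $P_S$, and expressing that excess in terms of a divergence $D(Q\|P_S)$ with the correct constant $b'_T$. This likely requires expanding the mutual information as $I_{S\sim Q}(S;R_T) = \E_{S\sim Q}[\KL(P_{R_T\mid S}\|P_{R_T}^Q)]$, adding and subtracting the $P_S$-marginal $P_{R_T}^{P_S}$, and controlling the resulting cross terms — one of which is itself bounded by a worst-case leakage quantity that \eqref{eq:universal_bound} caps at $b'_T$, and another that reduces to the Pinsker bound on $\|Q - P_S\|$. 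Once that per-transcript inequality $I_{S\sim P_{S\mid\tau_{T-1}}}(S;R_T) \leq b_T + b'_T\sqrt{\tfrac12\KL(P_{S\mid\tau_{T-1}}\|P_S)}$ (schematically) is in hand, taking $\E_{\tau_{T-1}}$, applying Jensen to pull the expectation inside the square root, and using $\E_{\tau_{T-1}}[\KL(P_{S\mid\tau_{T-1}}\|P_S)] = I(S;\Tau_{T-1})$ together with the already-proved static bound $I(S;\Tau_{T-1}) \leq B_{T-1}$ closes the induction, giving $B_T = B_{T-1} + \min\{b'_T\sqrt{2B_{T-1}} + b_T,\ b'_T\}$ as claimed.
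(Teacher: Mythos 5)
First, a structural note: the paper itself does not prove this theorem; its ``proof'' is a citation to Theorem~4.2 and Appendix~A.8 of \cite{hanshen2025thesis}. So there is no in-paper argument to compare against, and I can only assess your proposal on its own merits.

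Your high-level structure is right: induction with the chain rule, the trivial $b'_T$ branch from \eqref{eq:universal_bound} applied to $Q = P_{S\mid\Tau_{T-1}=\tau_{T-1}}$, and a change-of-measure/Pinsker argument for the $b_T + b'_T\sqrt{2B_{T-1}}$ branch. But the key per-transcript inequality you call the ``main obstacle'' has a genuine gap, not just a missing calculation. Concretely, your argument reduces to controlling
$
\E_{S\sim Q}[g(S)] - \E_{S\sim P_S}[g(S)]
$
with $g(s) = \KL(P_{R_T\mid S=s}\,\|\,P_{R_T}^{P_S})$, $Q = P_{S\mid\tau_{T-1}}$, and $\E_{P_S}[g]\le b_T$. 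A Pinsker/TV step gives this difference bounded by $\|g\|_\infty\cdot\|Q-P_S\|_{\mathrm{TV}}$, so to recover a $b'_T\sqrt{2\KL(Q\|P_S)}$ term you need a pointwise bound $\sup_s g(s) = O(b'_T)$. However, \eqref{eq:universal_bound} only constrains the channel \emph{capacity} $\sup_Q I_{S\sim Q}(S;R_T)$. By the information-radius characterization, the capacity equals $\min_{P_W}\sup_s\KL(P_{R_T\mid S=s}\|P_W)$, i.e.\ the sup-KL against the \emph{capacity-achieving} output distribution $P_W^*$ --- not against $P_{R_T}^{P_S}$. The quantity $\sup_s\KL(P_{R_T\mid S=s}\|P_{R_T}^{P_S})$ can be much larger than the capacity (consider a rare $s$ whose output law is far from the $P_S$-mixture: it contributes negligibly to the capacity but has enormous KL against $P_{R_T}^{P_S}$). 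If instead you anchor on $P_W^*$ so that $\sup_s\KL(P_{R_T\mid S=s}\|P_W^*)\le b'_T$, you lose the tight $b_T$ baseline, since $\E_{P_S}[\KL(P_{R_T\mid S}\|P_W^*)] = b_T + \KL(P_{R_T}^{P_S}\|P_W^*)$, which is not controlled. So as written, neither choice of reference distribution closes the inequality, and the proposal does not establish the recurrence. The cited thesis evidently uses a sharper decomposition (or a different intermediate quantity) to get the $b'_T$ Lipschitz constant, and your sketch would need to identify it before the induction step can go through.

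One smaller point: you invoke \cref{lm:post} to identify the curator's belief with the true posterior, but \cref{lm:post} is about \cref{alg:pacpac}'s \emph{posterior-aware} update, whereas \cref{thm:thm4_2} concerns the \emph{static} algorithm that calibrates noise against $P_S$ at every step and maintains no belief state. You only need the generic Bayes identity $\E_{\tau_{T-1}}[\KL(P_{S\mid\Tau_{T-1}=\tau_{T-1}}\|P_S)] = I(S;\Tau_{T-1})$, which holds without any lemma, so the citation is misplaced though not fatal.
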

\begin{proof}
See Theorem 4.2 and Appendix A.8 of \cite{hanshen2025thesis}.
\end{proof}

We note that the first constraint \cref{eq:universal_bound} is an input-independent worst-case guarantee that is similar to the \emph{sensitivity} for local differential privacy~\cite{duchi2013ldp}, while the second constraint \cref{eq:static_bound} is the standard mutual information bound we enforce with regard to the specific choice of $P_S$ and $M_t$.

Assuming uniform budget allocation, i.e., $b_t=b$ and $b_t'=b'$ for all $t$, the recurrence relation of the cumulative MI bound is $B_t=B_{t-1}+\min\{b',b+b'\sqrt{2B_{t-1}}\}$, where $B_1=b_1$. We consider two scenarios for the choice of $b'$ relative to $b$.

First, if $b'$ is set such that $b'\approx b$, then $B_T=b'T$ scales linearly in $T$.
However, enforcing such a small $b'$ requires the noise $\Sigma_t$ to be calibrated to the worst-case capacity of the channel (i.e., \cref{eq:universal_bound}), rendering the instance-specific bound (i.e., \cref{eq:static_bound}) redundant. This ignores the stability of $M_t$ under $P_S$ and results in a significantly higher noise level, effectively reverting to a DP-style guarantee based on \emph{worst-case sensitivity} rather than \emph{average-case stability}. This negates the utility benefits of PAC privacy.

Second, if we set $b'\gg b$ to preserve utility (i.e., we rely on the $P_S$-specific \cref{eq:static_bound}), the recurrence relation is: \begin{equation}\label{eq:recurrence_case_2}
B_t=B_{t-1}+b+b'\sqrt{2B_{t-1}}.
\end{equation}
We now show that under \cref{eq:recurrence_case_2}, the total mutual information bound scales quadratically in $T$, i.e., $B_T=\Omega(T^2)$.

Since $b>0$, we have $B_t\geq B_{t-1}+b'\sqrt{B_{t-1}}$.
Taking the square root of both sides and applying the inequality $\sqrt{x+y} \geq \sqrt{x} + \frac{y}{2\sqrt{x}} - \frac{y^2}{8x\sqrt{x}}$ (due to Taylor expansion):
$$
\sqrt{B_t}\geq \sqrt{B_{t-1}+b'\sqrt{B_{t-1}}} \geq \sqrt{B_{t-1}}+\frac {b'} 2 - \frac{b'^2}{8\sqrt{B_{t-1}}}.
$$

Since $B_t$ is strictly increasing at $t$ grows, there exists some $t'$ such that $B_{t}\geq b'^2$ for all $t\geq t'$. Then, the inequality above becomes $
\sqrt{B_t}\geq\sqrt{B_{t-1}}+3b'/8$, which is an arithmetic progression for $\sqrt{B_t}$. Therefore, $\sqrt{B_T}=\Omega(b'T)$ and $B_T=\Omega(b'^2T^2)$. In other words, the overall MI bound grows \emph{quadratically} in the horizon $T$.

Therefore, under \cref{thm:thm4_2}, one must choose between two suboptimal outcomes: either calibrates excessive noise to satisfy the input-independent MI bound at the cost of utility, or accepts a quadratic composition bound where at the same level of overall privacy guarantee and noise level, much fewer releases can be allowed.
In contrast, with \cref{alg:pacpac}, we achieve linear composition with instance-based noise calibration.

\section{Deferred Full Proofs}
\subsection{Proof of Lemma \ref{lm:post}}\label[appendix]{app:lm_proof}
\begin{proof}
    We proceed by induction on the time step $t$.
    
    For $t=0$, the only possible $\tau_0$ is empty and \cref{alg:pacpac} sets $P_0=P_S=P_{S\mid \Tau_0=\emptyset}$.
    
    Assume the property holds for $t-1$. That is, for any realizable history $\tau_{t-1}$, given that $\Tau_{t-1}=\tau_{t-1}$, \begin{equation}
        \label{eq:inductive_assumption}P_{t-1}=P_{S\mid\Tau_{t-1}=\tau_{t-1}}.
    \end{equation}
    
    At time step $t$, let $\tau_{t}=(\tau_{t-1},(m_t,r_t))$ be a realizable history. We derive the density of the posterior of $S$ conditioned on $\Tau_t=\tau_t$ via Bayes' Rule: $\forall s\in\Sspace$,
    \begin{align}
    &P(s \mid \tau_t) = P(s \mid \tau_{t-1}, m_t, r_t)\nonumber\\
    &\propto P(r_t, m_t \mid s, \tau_{t-1}) \cdot P(s \mid \tau_{t-1})\nonumber\\
    &= P(r_t, m_t \mid s, \tau_{t-1})\cdot P_{t-1}(s)\label{eq:proof3.2_eq1}\\
    &=P(m_t\mid s,\tau_{t-1})\cdot P(r_t\mid m_t,s,\tau_{t-1})\cdot P_{t-1}(s)\nonumber\\
    &=P(m_t\mid \tau_{t-1})\cdot P(r_t\mid m_t,s,\tau_{t-1})\cdot P_{t-1}(s)\label{eq:proof3.2_eq2}\\
    &\propto P(r_t\mid m_t,s,\tau_{t-1})\cdot P_{t-1}(s),\nonumber%
    \end{align}
    where \cref{eq:proof3.2_eq1} is due to induction hypothesis (\cref{eq:inductive_assumption}) and \cref{eq:proof3.2_eq2} is due to the assumption in \cref{eq:adv_ass}.
    
    Conditioned on $S=s,\Tau_{t-1}=\tau_{t-1},M_t=m_t$, we have $R_t=m_t(s)+\mathcal N(0,\Sigma_t)$ where $\Sigma_t=\Sigma(P_{t-1},M_t,b_t)$ is known. Hence, $P_{R_t\mid m_t,s,\tau_{t-1}}=\Normal(m_t(s),\Sigma_t)$ and $$
    \begin{aligned}
    &P(r_t\mid m_t,s,\tau_{t-1})\\
    &\propto\exp\left(-\frac12(r_t-m_t(s))^\top\Sigma_t^{-1}(r_t-m_t(s))\right).
    \end{aligned}
    $$
    Substituting back, we have $(s \mid \tau_t)\propto$ $$
    P P_{t-1}(s)\cdot\exp\left(-\frac12(r_t-m_t(s))^\top\Sigma_t^{-1}(r_t-m_t(s))\right),$$ which matches exactly \cref{eq:p_upadte} to compute $P_t$.
\end{proof}

\subsection{Proof of Theorem \ref{thm:composition}}\label[appendix]{app:composition_proof}
\begin{proof}
Since 
$$
\begin{aligned}
I(S ; R_1,\dots,R_T)&\leq I(S;(R_1,M_1)\dots,(R_T,M_T))\\
&=I(S;\Tau_T),
\end{aligned}
$$ we prove a stronger guarantee $I(S;\Tau_T)\leq B_T$.

By the chain rule of mutual information, we have
$$\begin{aligned}
    \textstyle I(S;\Tau_T)&\textstyle=\sum_{t=1}^TI(S;M_t,R_t\mid \Tau_{t-1})\\
    &\textstyle=\sum_{t=1}^T I(S;M_t\mid \Tau_{t-1})+I(S;R_t\mid M_t,\Tau_{t-1})\\
    &\textstyle=\sum_{t=1}^TI(S;R_t\mid M_t,\Tau_{t-1}),
\end{aligned}$$
where the last equality is due to \cref{eq:adv_ass}.

We analyze its $t$-th term:
{
\def\myexp{\E_{\tau_{t-1}, m_t}}
\def\cond{\mid \tau_{t-1},\,m_t}
\begin{align}
&I(S;R_t\mid M_t,\Tau_{t-1})=\myexp[I(S;R_t\cond)]\nonumber\\
&=\myexp [I(S; m_t(S)\nonumber\\
&\quad\qquad\qquad+\mathcal N(0, \Sigma(P_{t-1},m_t,b_t)\cond)]\nonumber\\
&=\myexp [I(S; m_t(S)\nonumber\\
&\quad\qquad\qquad+\mathcal N(0, \Sigma(P_{S\mid\tau_{t-1}},m_t,b_t)\cond)] \label{eq:use_lm1}\\
&=\myexp [I_{S\sim P_{S\mid\tau_{t-1},m_t}}(S; m_t(S)\nonumber\\
&\quad\qquad\qquad+\mathcal N(0, \Sigma(P_{S\mid\tau_{t-1}},m_t,b_t))]\nonumber\\
&=\myexp [I_{S\sim P_{S\mid\tau_{t-1}}}(S; m_t(S)\nonumber\\
&\quad\qquad\qquad+\mathcal N(0, \Sigma(P_{S\mid\tau_{t-1}},m_t,b_t))]\label{eq:use_cond_indp}\\
&\leq\myexp[b_t]=b_t.\label{eq:use_thm1}
\end{align}
}

We justify the derivations as follows. \cref{eq:use_lm1} holds as \cref{lm:post} guarantees $P_{t-1} = P_{S \mid \Tau_{t-1}=\tau_{t-1}}$ when conditioned on $\Tau_{t-1}=\tau_{t-1}$, and the additional conditioning on $M_t$ in the prior is redundant due to the adversary's conditional independence (cf. \cref{eq:adv_ass}). This independence also implies \cref{eq:use_cond_indp}, as it allows us to drop $M_t$ from the conditioning in the probability measure. Finally, \cref{eq:use_thm1} follows directly from \cref{def:noise}.

Summing over $t$ recovers the statement.
\end{proof}

\subsection{Proof of Proposition \ref{prop:filtering}}\label[appendix]{app:filter_proof}

\begin{proof}%
Under $y\neq \tilde y$, the label is altered by noise, and the probability of retaining this mislabelled sample $(x,\tilde y)$ is
$$\Pr(\min_{j\neq\tilde y} T_j(r)\geq \Phi^{-1}(1-\alpha))\leq \Pr(T_y(r)\geq\Phi^{-1}(1-\alpha)).$$

Here, $T_y(r)$ is $$
\begin{aligned}
t&=\frac{(e_{\tilde y}-e_y)^\top\Sigma^{-1}(r-e_y)}{\sqrt{(e_{\tilde y}-e_y)^\top\Sigma^{-1}(e_{\tilde y}-e_y)}}\\
&=\frac{(e_{\tilde y}-e_y)^\top\Sigma^{-1}}{\sqrt{(e_{\tilde y}-e_y)^\top\Sigma^{-1}(e_{\tilde y}-e_y)}}(r-e_y):=A(r-e_y),
\end{aligned}
$$ where $r\sim\Normal(e_y,\Sigma)$. Therefore, $t\in\Real$ is a linear transformation of $(r-e_y)\sim\Normal(0,\Sigma)$, and hence follows a normal distribution with zero mean, whose variance is:\begin{align*}
A\Sigma A^\top
&=\frac{(e_{\tilde y}-e_y)^\top\Sigma^{-1}}{(e_{\tilde y}-e_y)^\top\Sigma^{-1}(e_{\tilde y}-e_y)}\Sigma (\Sigma^{-1})^\top (e_{\tilde y}-e_y)\\
&=\frac{(e_{\tilde y}-e_y)^\top\Sigma^{-1}(e_{\tilde y}-e_y)}{(e_{\tilde y}-e_y)^\top\Sigma^{-1}(e_{\tilde y}-e_y)}=1.
\end{align*}

Therefore, $t\sim\Normal(0,1)$, and $$
\Pr(\min_{j\neq\tilde y} T_j(r)\geq \Phi^{-1}(1-\alpha))\leq \Pr(t\geq\Phi^{-1}(1-\alpha))=\alpha,
$$ which proves the statement.\end{proof}

\subsection{Proof of Proposition \ref{prop:optimal_mia}}\label[appendix]{app:optimal_mia_proof}
\begin{proof}%
Let $Y = \mathbf{1}_{u^* \in S}$ be the true membership status. Due to the balanced construction of $P_S$, we have $\Pr(Y=0)=\Pr(Y=1)=0.5$. Let $A:\mathrm{supp}(\Tau_T)\to\{0,1\}$ be a membership inference decision rule. Then its accuracy is
\begin{align*}
\text{Accuracy} & =\E_{y}[\E_{\tau_T}[\mathbf 1_{A(\tau_t)=y}\mid Y=y]]\\
&=\E_{\tau_T}[ \E_y[\mathbf 1_{A(\tau_t)=y}\mid \Tau_T=\tau_T] ]\\
&\leq \E_{\tau_T}[
\max_{a\in\{0,1\}} \E_y[\mathbf 1_{a=y}\mid \Tau_T=\tau_T]
].
\end{align*}

Therefore, the following decision rule is optimal: \begin{align*}
A(\tau_T)&=\argmax_{a\in\{0,1\}}\E_y[\mathbf 1_{a=y}\mid\Tau_T=\tau_T]\\
&=\argmax_{a\in\{0,1\}} (1/2\cdot \mathbf 1_{a=0} \Pr(Y=0\mid\Tau_T=\tau_T)\\
&\qquad\qquad+ 1/2\cdot\mathbf 1_{a=1} \Pr(Y=1\mid\Tau_T=\tau_T)).
\end{align*}
This is equivalent to $A(\tau_T)=1$ if and only if \begin{gather*}
    \Pr(Y=0\mid\Tau_T=\tau_T) < \Pr(Y=1\mid\Tau_T=\tau_T)\\
    \Leftrightarrow \Pr(Y=1\mid\Tau_T=\tau_T)>0.5,
\end{gather*} where \begin{align*}
\Pr(Y=1\mid\Tau_T=\tau_T) = \textstyle\sum_{S\in\Sspace} \mathbf1_{u^*\in S} \cdot P(S\mid\Tau_t=\tau_t).
\end{align*}
This recovers the decision rule $A^*$ in the statement.
\end{proof}

\begin{table*}[t]
\caption{Average test accuracy (\%) of PAC-private predictions under various per-query mutual information budgets $2^{-32}\leq b\leq 2^{-4}$ across datasets when privatizing the predicted confidence scores vs one-hot vectors.}
\label{tab:score_vs_onehot}
\centering
\setlength{\tabcolsep}{2.5pt}
\begin{tabular}{llccccccccccc}
\toprule
Modality & Dataset & Non-Private & Output & $b=\infty$ & $b=2^{-4}$ & $b=2^{-8}$ & $b=2^{-12}$ & $b=2^{-16}$ & $b=2^{-20}$ & $b=2^{-24}$ & $b=2^{-28}$ & $b=2^{-32}$ \\
\midrule
\multirow{4.2}{*}{Tabular} & \multirow{2}{*}{Census Income} & \multirow{2}{*}{87.39} & Confidence & 87.17 & 87.16 & 86.59 & 74.72 & 65.55 & 57.57 & 52.40 & 50.62 & 50.16 \\
 &  &  & One-Hot & 87.17 & 87.15 & 86.68 & 85.92 & 85.86 & 85.84 & 85.84 & 85.84 & 85.84 \\
\cmidrule{2-13}
 & \multirow{2}{*}{Bank Marketing} & \multirow{2}{*}{91.98} & Confidence & 91.69 & 91.69 & 91.26 & 84.52 & 77.64 & 62.44 & 53.33 & 50.86 & 50.19 \\
 &  &  & One-Hot & 91.69 & 91.67 & 91.02 & 90.36 & 90.28 & 90.28 & 90.27 & 90.27 & 90.29 \\
\midrule
\multirow{4.2}{*}{Image} & \multirow{2}{*}{CIFAR-10} & \multirow{2}{*}{97.37} & Confidence & 95.80 & 95.79 & 93.80 & 45.73 & 35.86 & 34.04 & 33.57 & 33.45 & 33.41 \\
 &  &  & One-Hot & 95.80 & 95.71 & 93.52 & 88.35 & 87.90 & 87.81 & 87.80 & 87.80 & 87.79 \\
\cmidrule{2-13}
 & \multirow{2}{*}{CIFAR-100} & \multirow{2}{*}{84.02} & Confidence & 77.79 & 77.72 & 74.36 & 35.79 & 32.36 & 31.76 & 31.66 & 31.63 & 31.61 \\
 &  &  & One-Hot & 77.79 & 77.69 & 75.56 & 58.38 & 56.34 & 56.17 & 56.13 & 56.10 & 56.11 \\
\midrule
\multirow{4.2}{*}{Text} & \multirow{2}{*}{IMDb Reviews} & \multirow{2}{*}{87.10} & Confidence & 85.13 & 85.12 & 84.69 & 71.91 & 55.07 & 51.28 & 50.32 & 50.09 & 50.04 \\
 &  &  & One-Hot & 85.13 & 85.13 & 84.46 & 74.26 & 69.32 & 69.16 & 69.09 & 69.10 & 69.10 \\
\cmidrule{2-13}
 & \multirow{2}{*}{AG News} & \multirow{2}{*}{91.61} & Confidence & 90.44 & 90.39 & 88.86 & 62.88 & 45.41 & 39.99 & 38.67 & 38.34 & 38.29 \\
 &  &  & One-Hot & 90.44 & 90.35 & 87.95 & 80.18 & 79.42 & 79.31 & 79.27 & 79.25 & 79.25 \\
\bottomrule
\end{tabular}
\end{table*}

\section{Additional Attacks}\label[appendix]{app:attacks}

We establish the prior success rate $\bar\delta_0$ of an informed adversary for several attacks under the input distribution $P_S$ constructed in \cref{sec:input_discribution}: $P_S$ is uniform over $\Sspace=\{S_1,\ldots,S_m\}$, with each record contained in randomly selected $m/2$ of the $m$ subsets. For each attack we cast it as a criterion $\rho$ as in \cref{def:pac} and compute the optimal prior --- the highest success an informed adversary attains \emph{before} observations. Combined with the total MI budget $B_T$, each prior yields a posterior success bound via \cref{eq:delta<=mi}.

\begin{figure}[t]
\centering
\includegraphics{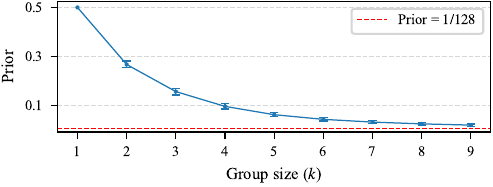}
\caption{Average prior success rate $\bar\delta_0$ of group MIA as a function of the group size $k$, under our construction of $\Sspace$ with $m=128$; error bars show standard deviation over random subset constructions.}
\label{fig:group_mia_prior}
\end{figure}

\paragraph{Group MIA}
Fix a $k$-tuple of records $\vec u=(u_1,\ldots,u_k)\in U^k$. The adversary guesses the tuple's joint membership configuration and succeeds iff the guess matches the realized configuration:
$\rho(\hat S,S)=1$ iff $(\mathbf 1_{u_j\in\hat S})_{j=1}^k=(\mathbf 1_{u_j\in S})_{j=1}^k.$
Since $S$ is uniform over $\Sspace$, $(\mathbf 1_{u_j\in S})_{j=1}^k$ follows the empirical distribution of the $m$ configurations $\{(\mathbf 1_{u_j\in S_i})_{j=1}^k\}_{i=1}^m$, and the optimal prior is the frequency of the most common configuration,
$\bar\delta_0=\max_{c\in\{0,1\}^k}\frac{1}{m}|\{\,i:(\mathbf 1_{u_j\in S_i})_{j=1}^k=c\}|$.
This quantity depends on the realized construction of $\Sspace$, and its expectation over the random construction admits no closed form; we estimate it by simulation. For $k=1$ the configuration is a single bit set in exactly $m/2$ subsets, so $\bar\delta_0=1/2$, recovering individual MIA. As $k$ increases, the configurations spread over up to $2^k$ values and the prior decreases toward the $1/m$ floor, as shown in \cref{fig:group_mia_prior}.
In contrast, DP must revert to group privacy and increase $\epsilon$ to bound group MIA success rates.

\paragraph{Individual Example Reconstruction}
The adversary outputs a single record $\hat x$ and succeeds iff it belongs to the secret training set, i.e.,
$\rho(\hat x,S)=\mathbf 1_{\hat x\in S}$.
Because every record lies in exactly $m/2$ of the $m$ subsets, $\Pr_{S\sim P_S}[\hat x\in S]=1/2$ for every $\hat x\in U$, so the informed-adversary prior is $\bar\delta_0=1/2$. This rate is tied to our sampling rate: a construction in which each record appears in fewer subsets lowers it, at the cost of moving the membership prior away from $50\%$. PAC privacy further admits a relaxed criterion that concerns approximate reconstruction, e.g.,
$\rho(\hat x,S)=\mathbf 1_{\exists\,x'\in S: \mathrm{dist}(\hat x,x')\le t}$
for a threshold $t$; the corresponding prior is data-dependent, governed by the geometry of $U$, and can likewise be measured under $P_S$. We note that DP can similarly provide provable guarantees on reconstruction~\cite{balle2022reconstructing} and recent work~\cite{hanshen2024ccs} shows that PAC privacy does so with better utility.

\paragraph{Full Training-Set Reconstruction}
The adversary outputs a candidate set $\hat S$ and succeeds iff it equals the secret, i.e.,
$\rho(\hat S,S)=\mathbf 1_{\hat S=S}$.
As $S$ is uniform over $m$ candidates, the optimal prior is $\bar\delta_0=\max_i\Pr[S=S_i]=1/m$. This is the floor of \cref{prop:floor}: no attack where identification of the secret constitutes success can have a lower prior.

\paragraph{Model Extraction}
The adversary outputs a model $g:\mathcal X\to\mathcal Y$ and succeeds iff it matches the model trained on the secret, i.e., 
$\rho(g,S)=\mathbf 1_{g\equiv\hat f(\,\cdot\,;S)}$,
where $\equiv$ denotes equality of weights or functionality, depending on the attack definition~\cite{tramer2016stealing}. Each candidate $S_i$ induces a model $\hat f(\,\cdot\,;S_i)$, and the optimal prior is the frequency of the most common model among $\{\hat f(\,\cdot\,;S_i)\}_{i=1}^m$. When these $m$ models are distinct --- as is typically the case, since training on different subsets yields different weights and decision boundaries --- each occurs once and the prior is $\bar\delta_0=1/m$, again the floor.

All priors above are computed for the \emph{informed} adversary of our threat model, who knows $P_S$ and the realized construction of $\Sspace$, and are therefore conservative: a realistic adversary, who does not know the specific subsets and must infer them from a high-entropy universe, has a strictly lower prior. Crucially, the 
posterior success bounds of \cref{sec:privacy_guarantees} derived from \cref{thm:post_adv_mi} remain valid for \emph{any} adversary.

\section{Implementation Details}\label[appendix]{app:implementation_details}
\vspace{-1ex}
\paragraph{Tabular Datasets} For tabular datasets, we use XGBoost~\cite{chen2016xgboost} due to its strong performance on structured data. Categorical features are handled using XGBoost's built-in automatic feature processing method, which internally applies appropriate encoding without requiring manual preprocessing.
Model selection is performed on the secret training set $S$ using five-fold cross validation. We conduct a grid search over key hyperparameters controlling model capacity, learning dynamics, and regularization. Specifically, we vary the maximum tree depth, learning rate, number of boosting iterations, row/column subsampling ratios, minimum child weight, and the regularization parameter $\gamma$. The final model is trained on $S$ using the hyperparameter configuration that achieves the best average cross-validation performance.

\paragraph{Image Datasets} For CIFAR-10 and CIFAR-100, we train a Wide-ResNet-28-10~\cite{zagoruyko2016wide} model from scratch. All experiments are conducted using standard data preprocessing and augmentation protocols. Images are randomly cropped with padding and horizontally flipped, followed by normalization. We use ImageNet statistics for normalization, which does not introduce privacy leakage. For stronger data augmentation, we additionally employ CutMix~\cite{yun2019cutmix} and MixUp~\cite{zhang2018mixup}, where one of the two is randomly applied on each batch.
The model is trained without dropout. Optimization is performed with SGD, with momentum set to 0.9 and weight decay set at $5\times10^{-4}$ for regularization. The initial learning rate is set to 0.1 and decayed using a multi-step schedule, with learning rate drops by 20\% at 60 and 120 epochs. All models are trained for a total of 200 epochs. 
Mixed-precision (FP16) training is enabled to improve efficiency.

\paragraph{Text Datasets} For NLP tasks, we use train a BERT-Small~\cite{devlin2019bert,turc2019well} from scratch for text classification. The model consists of 4 transformer encoder layers with a hidden dimension of 512, 8 attention heads per layer, and an intermediate feed-forward dimension of 2048. Both hidden-state and attention dropout are set to 0.1. The maximum sequence length is 512 tokens.
We use a batch size of 64 and train for up to 50 epochs with early stopping, which monitors the accuracy on a reserved 10\% of the training set $S$. Optimization is done with AdamW, using a baseline learning rate of $10^{-4}$, linear warm-up for the first 500 steps, and weight decay set to 0.01 for regularization. Mixed-precision (FP16) training is enabled to improve efficiency.

\paragraph{DP Baseline}
For our DP prediction baseline, 
for each of the $K$ disjoint subsets, we train a model with the same architecture, hyperparameter search configuration, and training recipe as our experiments for PAC privacy.
We then run DP prediction for $K\in\{25,50,100,200,250,500\}$, each for 1000 trials, and give it the advantage to select the best-performing $K$ \emph{at no privacy cost}. We use the GNMax mechanism~\cite{papernote2017pate} to noise the aggregate of the $K$ model's hard-label predictions and use exact GDP composition to calibrate the noise scale $\sigma$ to achieve $(\epsilon,\delta)$-DP after varying number of queries. We do not adopt data-dependent accounting~\cite{papernote2017pate,pate}, because our PAC privacy guarantee after $T$ queries is not data-dependent. That is, the resulting privacy guarantee (i.e., the total MI and the attack success bound) is specified \emph{a priori} and does not depend on online data.

\paragraph{Distillation} For our proof of concept for the proposed privacy-preserving distillation protocol, we use CINIC-10~\cite{darlow2018cinic10imagenetcifar10} as an auxiliary dataset. In particular, we only use the ImageNet subset of this dataset to avoid data leakage on CIFAR-10. This constitutes of a total of 210,000 examples. When training the distillation model, we use the same model architecture, i.e., Wide-ResNet-28-10, and the same training recipe as we train the models on subsets of CIFAR-10, with the addition of balanced sampling as the filtered, privately labeled distillation dataset is not guaranteed to be balanced.

\section{Additional Experimental Results}\label[appendix]{app:results}
\noindent\textbf{Effects of Output Stability.}
\cref{tab:score_vs_onehot} presents the test accuracy of PAC-private predictions when privatizing the predicted confidence scores compared to one-hot vectors.
We note that when the per-step MI budget $b$ is relatively large, i.e., $b\geq 2^{-8}$, privatizing confidence scores leads to slightly better utility because the required noise level is low while confidence scores provide more information than the one-hot predictions. However, when the per-step MI budget tightens, the utility of the PAC-private confidence scores rapidly drops to a much lower level compared to the one-hot predictions. In particular, when $b=2^{-32}$, for the three binary classification tasks, i.e., Census Income, Bank Marketing, and IMDb Reviews, its accuracy collapses to chance. This is because the confidence scores are much less stable than the one-hot predictions, which leads to excessive amounts of noise required to hide the secret set $S$ when the per-step privacy budget $b$ approaches zero.
At last, we note that the DP prediction baseline already privatizes hard-label predictions while still unable to exploit their inherent stability (cf. \cref{fig:pac_vs_dp}).

\end{document}